%% file: main.tex
\documentclass{article}

\usepackage{iclr2027_conference,times}
\input{math_commands.tex}

\usepackage{amsmath,amssymb,amsthm,mathtools,bm}
\usepackage{booktabs}
\usepackage{graphicx}
\usepackage{float}
\usepackage{placeins}
\usepackage{microtype}
\usepackage{hyperref}
\usepackage{url}

\newtheorem{theorem}{Theorem}[section]
\newtheorem{proposition}[theorem]{Proposition}
\newtheorem{lemma}[theorem]{Lemma}
\newtheorem{corollary}[theorem]{Corollary}
\theoremstyle{definition}

\theoremstyle{remark}

\title{The Sequential Price of Continual Learning}

\author{Zonghuan Xu\\
Fudan University\And
Xingjun Ma\\
Fudan University}

\iclrfinalcopy

\hypersetup{pdftitle={The Sequential Price of Continual Learning},pdfauthor={Zonghuan Xu, Xingjun Ma}}
\begin{document}

\maketitle

\begin{abstract}
Sequential task updates are fundamental to continual learning, but their
recency bias can impose a lasting performance cost. We study this cost in an
overparameterized linear-regression model with i.i.d. task sampling. We prove
that distribution-level forgetting and population loss converge to the same
stationary limit. We quantify the additional loss incurred by sequential exact
fitting, or the \emph{sequential price}. In more homogeneous task geometries,
it equals the intrinsic loss asymptotically attained by joint training, making
the total loss twice as large. We further analyze
fixed-strength elastic weight consolidation (EWC) under general task curvatures and characterize its stationary
sequential price at every regularization strength. Under strong regularization,
the price decays inversely with EWC strength while the mean-square coupling
horizon grows proportionally. Experiments on Jester and Rotated MNIST support the
predicted sequential price and its reduction by EWC, with quantitative agreement
on real-world tasks satisfying the theory's assumptions and qualitative
agreement under nonlinear finite-step training.
\end{abstract}

\section{Introduction}
\label{sec:introduction}

Continual learning studies how models learn from a sequence of tasks that
arrive over time \citep{delange2021continual,wang2024survey,mai2022survey}.
A model updates its current state on each new task and then
carries the resulting state forward to subsequent tasks, so its capabilities
are gradually shaped by a temporally ordered learning process. The most basic
learning structure of continual learning is therefore sequential learning:
tasks act on the model in sequence and jointly shape its behavior over time.
As models and agents are deployed over increasingly long time horizons,
understanding the behavior induced by this sequential structure becomes
increasingly important.

Sequential learning is inherently asymmetric in time. Each update responds
directly to the current task, giving recent tasks a more immediate influence
on the current model. Information from earlier tasks is retained and
transmitted through a succession of later model states. As tasks accumulate,
the model typically places greater emphasis on recent tasks, while performance
on past tasks gradually deteriorates, giving rise to forgetting
\citep{parisi2019continual,mirzadeh2020understanding,lesort2023challenging}.
This recency bias further suggests that sequential learning may impose a
persistent cost on the model's overall performance. A basic question is
therefore how sequential learning affects the model's stationary population
loss over the task distribution.

A natural approach to this question is to view the arriving tasks as
independent samples from a latent task distribution. The task distribution
provides a stable population-level description of the task sequence and lifts
forgetting along a particular training history to a distribution-level
quantity. Recent work, \emph{From Order to Distribution}
\citep{xu2026order}, adopts this
perspective to study forgetting in continual learning and connects the
dynamics induced by concrete task sequences to properties of the task
distribution. This distribution-level formulation provides a natural starting
point for characterizing the asymptotic behavior of sequential learning.

That work considers tasks that share a common solution and proves that
distribution-level forgetting converges to zero. This setting describes a
compatible family of tasks: each task has its own training data, while a
single parameter can solve all tasks simultaneously
\citep{evron2022catastrophic,xu2026order}. More general task families may
contain individually realizable tasks with conflicting optima, while the set
of parameters that solves all tasks simultaneously is typically empty. This
raises a direct follow-up question: how does
distribution-level forgetting behave asymptotically when tasks are
intrinsically incompatible?

We first connect the historical effects of sequential learning to its overall
performance. Distribution-level forgetting measures the current model by its
average loss on past tasks, whereas population loss measures its expected loss
over the task distribution. For conflicting tasks, both quantities generally
remain positive. We prove that, under i.i.d. task sampling and the stability
conditions studied here, they converge to the same stationary limit, and the
gap between them decays at rate $O(T^{-1})$. The dependence between a remote
task and the current model decays with task age, while the recent tasks that
remain strongly dependent form a vanishing fraction of the full history. The
forgetting plateau therefore also characterizes the sequential learner's
stationary population loss over the task distribution.

Identifying the forgetting plateau with stationary population loss allows it to
be compared directly with $R^\star$, the minimum population loss attainable
within the same model class. This quantity lower-bounds the population loss of
any learning procedure that outputs a single set of model parameters
(Lemma~\ref{lem:population-optimum}), and joint training on a growing collection
of tasks asymptotically attains it (Lemma~\ref{lem:joint-consistency}). We quantify
the additional loss incurred by sequential exact fitting beyond the intrinsic
loss $R^\star$ caused by task incompatibility. We call this additional cost the
\emph{sequential price}. This separates the unavoidable cost of conflicting
tasks from the additional cost of continually adapting to them. In more
homogeneous task geometries, the sequential price is as large as the intrinsic
loss itself, making the stationary loss twice that attained by joint training.

We next use the sequential price to analyze fixed-strength EWC under general
task curvatures
\citep{kirkpatrick2017overcoming,schwarz2018progress,benzing2022unifying},
quantifying how regularization changes both stationary performance and
convergence to stationarity. For every regularization strength, the stationary
sequential price is computable from the distribution's curvatures and targets.
In the strong-regularization regime, it decreases as $1/\lambda$, while the
mean-square coupling horizon grows as $\lambda$.

\paragraph{Contributions.}
Our main contributions are as follows.
\begin{itemize}
    \item We prove that distribution-level forgetting converges to population
    loss with an $O(T^{-1})$ gap, extending shared-solution theory to
    conflicting tasks.

    \item Using this result, we split the limiting loss into the unavoidable
    loss caused by task conflicts and the additional loss caused by learning
    tasks sequentially, which we call the sequential price, and show that the
    two are equal in more homogeneous settings.

    \item We calculate how EWC changes the sequential price at any fixed
    strength and show that, under strong regularization, the price decreases as
    $1/\lambda$ while the mean-square coupling horizon grows as $\lambda$.

    \item We apply the theory to Jester, whose naturally conflicting user tasks
    satisfy our assumptions. Exact predictions match sampled task streams,
    with stationary loss approximately 2 times the joint-training optimum
    despite heterogeneous task curvatures. On Rotated MNIST, corresponding
    loss patterns also emerge under nonlinear models and finite-step training.
\end{itemize}

\section{Setup}
\label{sec:setup}

We work with regression tasks $\tau=(X_\tau,y_\tau)$, where
$X_\tau\in\mathbb{R}^{n_\tau\times d}$ and
$y_\tau\in\mathbb{R}^{n_\tau}$. This is a standard exact-fit linear model for
continual-learning theory
\citep{evron2022catastrophic,ding2024understanding,goldfarb2024joint}.
Each task is individually realizable,
$y_\tau\in\operatorname{range}(X_\tau)$, but different tasks need not share a
common solution. Define
\[
\ell_\tau(w):=\frac{1}{n_\tau}\|X_\tau w-y_\tau\|_2^2,
\qquad
C_\tau:=\frac{1}{n_\tau}X_\tau^\top X_\tau,
\qquad
Z_\tau:=X_\tau^\dagger y_\tau.
\]
Then $C_\tau\succeq0$, $Z_\tau\in\operatorname{range}(C_\tau)$, and
\[
\ell_\tau(w)=\|w-Z_\tau\|_{C_\tau}^2,
\qquad
\|v\|_A^2:=v^\top A v.
\]

Given a task $\tau$ and an initialization $u\in\mathbb{R}^d$, the
minimum-change exact-fit update is
\[
S_\tau(u)
:=\arg\min_{w\in\mathbb{R}^d}\frac12\|w-u\|_2^2
\quad\text{subject to}\quad X_\tau w=y_\tau.
\]
A standard projection calculation gives
\[
S_\tau(u)
=u+X_\tau^\dagger(y_\tau-X_\tau u)
=P_\tau u+Z_\tau,
\qquad
P_\tau:=I-X_\tau^\dagger X_\tau
=I-C_\tau^\dagger C_\tau,
\]
where $P_\tau$ is the orthogonal projector onto
$\ker(X_\tau)=\ker(C_\tau)$. If the tasks admit a common solution $w^\circ$,
then $Z_\tau=(I-P_\tau)w^\circ$ and the update reduces to the homogeneous
projection dynamics
\[
S_\tau(u)-w^\circ=P_\tau(u-w^\circ).
\]
This minimum-change solution is the parameter selected by converged gradient
descent on a consistent least-squares task when training starts from $u$
\citep{evron2022catastrophic,lin2023theory,karpel2026l2}.
Without a common solution, no fixed centering removes the task-dependent
shift $Z_\tau$, and the recursion remains affine rather than homogeneous.

Tasks arrive as an i.i.d. stream from a task-generating distribution
\citep{xu2026order}:
\[
\tau_1,\tau_2,\ldots\stackrel{\mathrm{i.i.d.}}{\sim}\Pi.
\]
Starting from an independent $W_0$ with finite second moment, sequential exact
fitting generates
\[
W_t=S_{\tau_t}(W_{t-1})=P_tW_{t-1}+Z_t,
\qquad
P_t:=P_{\tau_t},\quad Z_t:=Z_{\tau_t}.
\]
We assume that $\mathbb{E}\|Z_\tau\|_2^2<\infty$ and that
$\|C_\tau\|_{\mathrm{op}}\leq L$ almost surely for some finite $L$.

Let
\[
R(w):=\mathbb{E}_{\tau\sim\Pi}[\ell_\tau(w)],
\qquad
\overline C:=\mathbb{E}[C_\tau],
\qquad
\overline h:=\mathbb{E}[C_\tau Z_\tau].
\]
We call $R(w)$ the population loss over the task distribution.
Since $C_\tau\succeq0$ for every task, $\overline C\succeq0$. For a nonzero
direction $v$, equality $v^\top\overline Cv=0$ is possible only in the special
case $C_\tau v=0$ almost surely: such a direction lies in $\ker(C_\tau)$ and is
invisible to almost every task. We assume that this degeneracy is absent, so
$\overline C\succ0$. If invisible directions exist, they do not affect task
losses almost surely and can be removed by restricting the analysis to
$\operatorname{range}(\overline C)$ (Appendix~\ref{app:setup-results}).
The population minimizer and minimum population loss are
\[
w_\star:=\overline C^{-1}\overline h,
\qquad
R^\star:=R(w_\star),
\]
and, since $\overline Cw_\star=\overline h$, direct expansion gives
\[
R(w)
=\mathbb{E}[Z_\tau^\top C_\tau Z_\tau]
+w^\top\overline Cw-2w^\top\overline h
=R^\star+\|w-w_\star\|_{\overline C}^2.
\]
Consequently, $w_\star$ is the unique population minimizer, and $R^\star$
lower-bounds the expected population loss of any possibly randomized learner
whose output is a single parameter vector. Moreover, $R^\star=0$ if and only
if the task distribution admits a common solution almost surely. Thus
$R^\star$ is the intrinsic loss induced by task incompatibility within this
model class (Lemma~\ref{lem:population-optimum}).
For
\[
\widehat W_T^{\mathrm{joint}}
\in\arg\min_w\frac1T\sum_{t=1}^T\ell_{\tau_t}(w),
\]
joint training is empirical risk minimization over the shared parameter $w$
\citep{vapnik1991principles}. Under our assumptions,
$\widehat W_T^{\mathrm{joint}}\to w_\star$ and
$R(\widehat W_T^{\mathrm{joint}})\to R^\star$ almost surely
(Lemma~\ref{lem:joint-consistency}). Hence $R^\star$ is also the asymptotic
population loss attained by joint training.

Following standard forgetting measures and exact-fit linear analyses
\citep{lopez2017gradient,evron2022catastrophic,xu2026order}, we define
\[
F_T
:=\mathbb{E}\!\left[
\frac{1}{T-1}\sum_{s=1}^{T-1}\ell_{\tau_s}(W_T)
\right],
\qquad
G_T:=\mathbb{E}[R(W_T)]
=\mathbb{E}[\ell_{\widetilde\tau}(W_T)],
\]
where $\widetilde\tau\sim\Pi$ is independent of the training history. The
current task is excluded from $F_T$; under exact fitting,
$\ell_{\tau_s}(W_s)=0$, so this historical loss equals the increase in loss
after task acquisition.

Finally, on the space $\mathbb{S}^d$ of symmetric matrices, define
\[
\mathcal{S}_\Pi(A):=\mathbb{E}[P_\tau A P_\tau].
\]
The collective coverage condition $\overline C\succ0$ implies that
$\mathcal{S}_\Pi$ is a strict contraction in Frobenius norm. Consequently, the
affine recursion admits a unique invariant distribution $\nu$ within the class
of distributions with finite second moment, in line with standard contractive
random-affine recursions
\citep{brandt1986stochastic,diaconis1999iterated}
(Appendix~\ref{app:setup-results}). For
$W_\infty\sim\nu$, define
\[
\mu:=\mathbb{E}[W_\infty],
\qquad
\Sigma:=\operatorname{Cov}(W_\infty),
\qquad
G_\infty:=\mathbb{E}[R(W_\infty)].
\]

\section{The Forgetting--Loss Equivalence}
\label{sec:forgetting-loss}

In this section, we prove that distribution-level forgetting approaches
population loss at rate $O(T^{-1})$, while both converge to the same stationary
limit. This identifies the forgetting plateau with stationary population loss
and provides the basis for separating intrinsic loss from sequential price in
the next section.

\begin{theorem}[Asymptotic Equivalence of Forgetting and Population Loss]
\label{thm:forgetting-loss}
Under the assumptions of Section~\ref{sec:setup}, recall that $W_T$ is the
parameter obtained after sequentially fitting the first $T$ tasks, $F_T$ is
its expected average loss on the first $T-1$ tasks, and
$G_T=\mathbb{E}[R(W_T)]$ is its expected population loss. Let
$\nu$ be the unique invariant distribution established in
Section~\ref{sec:setup}. For $W_\infty\sim\nu$, recall that
$G_\infty=\mathbb{E}[R(W_\infty)]$ is the stationary population loss. Then
\[
G_T\longrightarrow G_\infty,
\qquad
|F_T-G_T|=O(T^{-1}).
\]
Consequently,
\[
F_T\longrightarrow G_\infty.
\]
\end{theorem}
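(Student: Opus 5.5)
We track the first two moments of $W_T$ and use the strict Frobenius contraction of $\mathcal S_\Pi$ from Section~\ref{sec:setup}. Write $m_T:=\mathbb{E}[W_T]$ and $M_T:=\mathbb{E}[W_TW_T^\top]$. Because $\tau_t$ is independent of $W_{t-1}$, the recursion $W_t=P_tW_{t-1}+Z_t$ yields closed affine recursions
\[
m_t=\overline P m_{t-1}+\overline Z,\qquad
M_t=\mathcal S_\Pi(M_{t-1})+\mathbb{E}[P_\tau m_{t-1}Z_\tau^\top]+\mathbb{E}[Z_\tau m_{t-1}^\top P_\tau]+\mathbb{E}[Z_\tau Z_\tau^\top],
\]
with $\overline P:=\mathbb{E}[P_\tau]$ and $\overline Z:=\mathbb{E}[Z_\tau]$. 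The operator norm satisfies $\|\overline P\|_{\mathrm{op}}<1$, since $v^\top\overline Pv=1$ would force $C_\tau v=0$ almost surely. Also $\mathcal S_\Pi$ is a strict contraction. Hence $m_T\to\mu$ and $M_T\to\Sigma+\mu\mu^\top$ geometrically, and the limits are exactly the moments of $\nu$ by invariance. Since $R(w)=\mathbb{E}[Z^\top CZ]+\operatorname{tr}(\overline C\,ww^\top)-2w^\top\overline h$ is linear in $(w,ww^\top)$, it follows that $G_T\to G_\infty$, in fact geometrically.

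For the gap, write
\[
F_T-G_T=\frac1{T-1}\sum_{s=1}^{T-1}\bigl(\mathbb{E}[\ell_{\tau_s}(W_T)]-\mathbb{E}[\ell_{\widetilde\tau}(W_T)]\bigr).
\]
It then suffices to show that the $s$-th term $\Delta_{T,s}$ is bounded by $K\rho^{T-s}$ for some $\rho<1$ and a constant $K$ independent of $T$ and $s$. Summing the resulting geometric series gives $|F_T-G_T|\le K/((1-\rho)(T-1))=O(T^{-1})$.

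To bound $\Delta_{T,s}$, unroll the recursion from step $s$:
\[
W_T=\Phi_{T,s}W_s+\xi_{T,s},\qquad \Phi_{T,s}:=P_T\cdots P_{s+1},
\]
where $\xi_{T,s}$ depends only on $\tau_{s+1},\dots,\tau_T$. Introduce a fresh independent copy $\tau_s'$ of $\tau_s$ and the coupled iterate
\[
W_T'=\Phi_{T,s}W_s'+\xi_{T,s},\qquad W_s'=P_{\tau_s'}W_{s-1}+Z_{\tau_s'}.
\]
Then $(\tau_s,W_T')$ has the same law as $(\widetilde\tau,W_T)$, so
\[
\Delta_{T,s}=\mathbb{E}[\ell_{\tau_s}(W_T)-\ell_{\tau_s}(W_T')].
\]
The difference of the iterates is $W_T-W_T'=\Phi_{T,s}(W_s-W_s')$. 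Expanding the quadratic loss gives
\[
\ell_\tau(a)-\ell_\tau(b)=(a-b)^\top C_\tau(a+b-2Z_\tau).
\]
The leading cross term is then controlled by Cauchy--Schwarz and $\|C_\tau\|_{\mathrm{op}}\le L$:
\[
|\Delta_{T,s}|\le L\,\bigl(\mathbb{E}\|\Phi_{T,s}(W_s-W_s')\|^2\bigr)^{1/2}\bigl(\mathbb{E}\|W_T+W_T'-2Z_{\tau_s}\|^2\bigr)^{1/2}.
\]
Second moments are uniformly bounded by the first step, together with $\mathbb{E}\|Z_\tau\|^2<\infty$. For the first factor, $\Phi_{T,s}$ is independent of $D:=W_s-W_s'$, so
\[
\mathbb{E}\|\Phi_{T,s}D\|^2=\operatorname{tr}\bigl(\mathcal S_\Pi^{\,T-s}(\mathbb{E}[DD^\top])\bigr)\le\sqrt d\,\kappa^{T-s}\|\mathbb{E}DD^\top\|_F.
\]
Here $\kappa<1$ is the Frobenius contraction constant, and the identity uses the fact that the $P_t$ are symmetric and i.i.d. This gives the required geometric decay with $\rho=\kappa^{1/2}$.

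The main obstacle is the dependence between the evaluation task $\tau_s$ and the backward product $\Phi_{T,s}$. The two are independent, but $\tau_s$ also appears in $W_T$ and $W_s$, so the bound cannot simply be conditioned through. I would handle this as above, using Cauchy--Schwarz on the full expectation so that $\tau_s$ only enters through uniformly bounded second moments. The contraction then acts solely on the $\tau_s$-independent product applied to the independent difference $D$. The step $F_T\to G_\infty$ follows by combining the two displays.
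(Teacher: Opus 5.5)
Your proof is correct. The second half, the $O(T^{-1})$ gap, is essentially the paper's argument. You resample $\tau_s$ by an independent copy and note that the modified final state has the law of $W_T$ and is independent of $\tau_s$. You then write the discrepancy as $P_T\cdots P_{s+1}$ applied to an independent initial difference, control it through $\mathcal{S}_\Pi^{T-s}$ and Cauchy--Schwarz, and sum the geometric series.

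The genuine difference is in the first claim, $G_T\to G_\infty$. The paper synchronously couples $W_t$ with a chain started from $\nu$ and driven by the same tasks, so the difference is $P_t\cdots P_1D_0$. The same Frobenius-contraction estimate then gives $\mathbb{E}\|D_t\|_2^2\to0$, with $\sup_t\mathbb{E}\|W_t\|_2^2<\infty$ as a byproduct. That route is loss-agnostic, needing only a Cauchy--Schwarz bound on loss differences, and it runs both halves of the proof on a single mechanism.

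You instead close the first- and second-moment recursions and use $\|\overline P\|_{\mathrm{op}}<1$ and $\rho_\Pi<1$ to get geometric convergence to their unique fixed point. You identify that fixed point with the moments of $\nu$ by invariance and then use that $R$ is affine in $(w,ww^\top)$. This relies on the quadratic structure. It also uses the extra mean contraction, though that is in fact implied by $\rho_\Pi<1$: for unit $v$, $(v^\top\overline Pv)^2\le\mathbb{E}[(v^\top P_\tau v)^2]=\langle vv^\top,\mathcal{S}_\Pi(vv^\top)\rangle_F\le\rho_\Pi$.

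What your route buys is that, after centering by $\mu$, your fixed-point equations are exactly the stationary moment characterization $\mu=(I-\overline P)^{-1}\mathbb{E}[Z_\tau]$ and $\Sigma=\mathcal{S}_\Pi(\Sigma)+\mathbb{E}[\xi_\tau\xi_\tau^\top]$ of Theorem~\ref{thm:sequential-price}. So your first step also establishes that theorem's moment formulas.
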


\paragraph{Proof sketch.}
First, couple the recursion started from $W_0$ with a second recursion
whose initial state is drawn from $\nu$ and which is driven by the same task
sequence. The strict contraction of $\mathcal{S}_\Pi$ makes the distance
between their states converge to zero in $L^2$, and hence
$G_T\to G_\infty$.

We next compare $F_T$ with $G_T$. Each task $\tau_s$ appearing in $F_T$ helped
produce $W_T$, whereas the evaluation task in $G_T$ is independent of $W_T$.
Fix $s<T$ and draw $\tau_s'\sim\Pi$ independently of the original task stream.
Run the recursion with $\tau_s'$ at time $s$ and with the original tasks at all
other times, and denote the resulting final state by
$\widetilde W_T^{(s)}$. This state has the same distribution as $W_T$ and is
independent of $\tau_s$, so
\[
\mathbb{E}\!\left[\ell_{\tau_s}(\widetilde W_T^{(s)})\right]=G_T.
\]
After time $s$, the original and modified trajectories use the same tasks.
Their difference therefore satisfies
\[
W_T-\widetilde W_T^{(s)}
=P_TP_{T-1}\cdots P_{s+1}
\bigl(W_s-\widetilde W_s^{(s)}\bigr).
\]
Let $\rho_\Pi:=\|\mathcal{S}_\Pi\|_{\mathrm{op},F}<1$. The contraction of
$\mathcal{S}_\Pi$ and the uniform second-moment bounds imply, for a finite
constant $K$ independent of $s$ and $T$,
\[
\mathbb{E}\bigl\|W_T-\widetilde W_T^{(s)}\bigr\|_2^2
\leq K\rho_\Pi^{T-s}.
\]
Because the task losses are quadratic, $\|C_\tau\|_{\mathrm{op}}$ is bounded,
and the relevant second moments are finite, Cauchy--Schwarz then gives
\[
\left|
\mathbb{E}[\ell_{\tau_s}(W_T)]-G_T
\right|
\leq K\rho_\Pi^{(T-s)/2},
\]
after enlarging $K$ if necessary. Averaging over the past tasks yields
\[
|F_T-G_T|
\leq
\frac{K}{T-1}\sum_{j=1}^{T-1}\rho_\Pi^{j/2}
=O(T^{-1}),
\]
which proves the claim. A full proof is given in
Appendix~\ref{app:proof-forgetting-loss}.

\section{Intrinsic Loss and the Sequential Price}
\label{sec:sequential-price}

Having identified the forgetting plateau with stationary population loss, we
now separate the two sources of $G_\infty$. The optimum $R^\star$ captures the
intrinsic loss induced by task incompatibility, whereas
$G_\infty-R^\star$ captures the additional effect of sequential exact fitting.
The following theorem defines $G_\infty-R^\star$ as the sequential price and
characterizes it through the stationary state.

\begin{theorem}[Intrinsic Loss and the Sequential Price]
\label{thm:sequential-price}
Under the assumptions of Section~\ref{sec:setup}, the stationary population
loss satisfies
\[
G_\infty=R^\star+\Delta_{\mathrm{seq}},
\]
where the sequential price is
\[
\Delta_{\mathrm{seq}}
:=\|\mu-w_\star\|_{\overline C}^2
+\operatorname{tr}(\overline C\Sigma),
\qquad
\mu:=\mathbb{E}[W_\infty],
\quad
\Sigma:=\operatorname{Cov}(W_\infty).
\]
Here $R^\star$ is the minimum population loss attainable by any learning
procedure whose output is a single shared parameter vector, and is also the
asymptotic population loss attained by joint training. The stationary moments
in the sequential price are characterized by
\[
\mu=(I-\mathbb{E}[P_\tau])^{-1}\mathbb{E}[Z_\tau],
\qquad
\Sigma=(I-\mathcal{S}_\Pi)^{-1}
\mathbb{E}[\xi_\tau\xi_\tau^\top],
\]
where
\[
\xi_\tau:=Z_\tau-(I-P_\tau)\mu.
\]
Together with Theorem~\ref{thm:forgetting-loss}, this also gives
\[
F_T\longrightarrow R^\star+\Delta_{\mathrm{seq}}.
\]
\end{theorem}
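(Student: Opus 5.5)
The decomposition follows from the identity $R(w)=R^\star+\|w-w_\star\|_{\overline C}^2$ established in Section~\ref{sec:setup}. We evaluate it at $W_\infty\sim\nu$ and take expectations, which requires only the finite second moment of $\nu$. Writing $W_\infty-w_\star=(W_\infty-\mu)+(\mu-w_\star)$, the cross term has zero mean. Hence
\[
\mathbb{E}\|W_\infty-w_\star\|_{\overline C}^2=\|\mu-w_\star\|_{\overline C}^2+\mathbb{E}\,\operatorname{tr}\bigl(\overline C(W_\infty-\mu)(W_\infty-\mu)^\top\bigr),
\]
and the last term equals $\operatorname{tr}(\overline C\Sigma)$. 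This gives $G_\infty=R^\star+\Delta_{\mathrm{seq}}$. The two interpretations of $R^\star$, as a lower bound for any single-parameter learner and as the limit of joint training, are exactly Lemma~\ref{lem:population-optimum} and Lemma~\ref{lem:joint-consistency}. The final claim $F_T\to R^\star+\Delta_{\mathrm{seq}}$ is then immediate from Theorem~\ref{thm:forgetting-loss}.

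For the moments, we use invariance. If $W_\infty\sim\nu$ and $\tau\sim\Pi$ is independent of $W_\infty$, then $P_\tau W_\infty+Z_\tau\sim\nu$. Taking means gives $\mu=\mathbb{E}[P_\tau]\mu+\mathbb{E}[Z_\tau]$, where independence factors $\mathbb{E}[P_\tau W_\infty]=\mathbb{E}[P_\tau]\mu$.

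To solve for $\mu$ we need $I-\mathbb{E}[P_\tau]$ to be invertible. Let $v$ be a unit vector with $\mathbb{E}[P_\tau]v=v$. Since each $P_\tau$ is an orthogonal projector, $v^\top P_\tau v=\|P_\tau v\|^2\le1$, and averaging to $1$ forces $P_\tau v=v$ almost surely. That means $C_\tau v=0$ almost surely, which contradicts $\overline C\succ0$. So $\|\mathbb{E}P_\tau\|_{\mathrm{op}}<1$ and $\mu$ is given by the stated formula.

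For the covariance, center the invariance identity. Using $W_\infty-\mu=P_\tau(W_\infty-\mu)+P_\tau\mu+Z_\tau-\mu$, we get
\[
W'-\mu=P_\tau(W_\infty-\mu)+\xi_\tau,\qquad \xi_\tau=Z_\tau-(I-P_\tau)\mu,
\]
with $W'\sim\nu$. Taking second moments gives
\[
\Sigma=\mathcal{S}_\Pi(\Sigma)+\mathbb{E}[\xi_\tau\xi_\tau^\top]+\text{cross terms}.
\]
Each cross term has the form $\mathbb{E}[P_\tau(W_\infty-\mu)\xi_\tau^\top]$. Because $W_\infty$ is independent of $\tau$ and $\mathbb{E}[W_\infty-\mu]=0$, conditioning on $\tau$ shows that it vanishes. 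The hypotheses $\mathbb{E}\|Z_\tau\|^2<\infty$ and $\|P_\tau\|\le1$ make all these expectations finite.

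Since $\mathcal{S}_\Pi$ is a strict Frobenius contraction on $\mathbb{S}^d$ (Section~\ref{sec:setup}), $I-\mathcal{S}_\Pi$ is invertible by a Neumann series. This yields the stated formula for $\Sigma$.

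The main care point is justifying these manipulations at the level of the invariant law: that $\nu$ has finite second moments and that the one-step invariance holds with the fresh task independent of $W_\infty$. Both come from the construction in Appendix~\ref{app:setup-results}. Everything else is linear algebra.
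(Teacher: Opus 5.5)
Your proposal is correct and follows the paper's proof essentially step for step. Both use the pointwise identity of Lemma~\ref{lem:population-optimum} with the bias--variance split, one-step invariance with a fresh independent task for the mean and covariance equations, and inversion via the mean contraction and the Frobenius contraction of $\mathcal{S}_\Pi$; your inline argument for $\|\mathbb{E}P_\tau\|_{\mathrm{op}}<1$ is exactly Lemma~\ref{lem:mean-contraction}. The only slip is notational: the centering identity should have $W'-\mu$ on the left, with $W':=P_\tau W_\infty+Z_\tau$, not $W_\infty-\mu$.
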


\paragraph{Proof sketch.}
The population-loss identity from Section~\ref{sec:setup} gives
\[
G_\infty-R^\star
=\mathbb{E}\|W_\infty-w_\star\|_{\overline C}^2.
\]
Writing
$W_\infty-w_\star=(W_\infty-\mu)+(\mu-w_\star)$ and using
$\mathbb{E}[W_\infty-\mu]=0$ yields
\[
G_\infty-R^\star
=\|\mu-w_\star\|_{\overline C}^2
+\operatorname{tr}(\overline C\Sigma).
\]

To characterize the two stationary moments, let $W_\infty'$ denote the state
after one additional task is applied to $W_\infty$. Stationarity gives
\[
W_\infty'=P_\tau W_\infty+Z_\tau,
\qquad
W_\infty'\overset{d}{=}W_\infty,
\]
where $W_\infty$ is independent of the new task $\tau$. Taking expectations
gives
\[
\mu=\mathbb{E}[P_\tau]\mu+\mathbb{E}[Z_\tau].
\]
After subtracting $\mu$ and taking covariances, the cross terms vanish and
\[
\Sigma=\mathcal{S}_\Pi(\Sigma)
+\mathbb{E}[\xi_\tau\xi_\tau^\top].
\]
The strict contraction of $\mathcal{S}_\Pi$ then gives the stated formulas for
$\mu$ and $\Sigma$.
A full proof is given in Appendix~\ref{app:proof-sequential-price}.

The weighting by $\overline C$ is inherited directly from the population-loss
identity rather than chosen in defining the sequential price. Thus
$\|\mu-w_\star\|_{\overline C}^2$ is the loss caused by the displacement of
the stationary mean from the population minimizer, while
$\operatorname{tr}(\overline C\Sigma)$ is the loss caused by stationary
fluctuations around that mean. The sequential price is therefore the squared
bias plus the variance in the geometry induced by the population loss.

In general, $\mu$ and $w_\star$ solve different equations,
\[
\mu=\mathbb{E}[P_\tau]\mu+\mathbb{E}[Z_\tau],
\qquad
\overline Cw_\star=\mathbb{E}[C_\tau Z_\tau],
\]
so they need not coincide. For example, consider two equally likely
one-dimensional tasks
\[
\ell_1(w)=w^2,
\qquad
\ell_2(w)=4(w-1)^2.
\]
Sequential exact fitting places the parameter at the current task optimum, so
in stationarity it is $0$ or $1$ with equal probability and $\mu=1/2$.
Joint training instead accounts for the different curvatures and gives
$w_\star=4/5$.

The total sequential price remains comparable to the intrinsic loss when the
nonzero task curvatures are uniformly comparable, leading to the following
corollary.

\begin{corollary}[Sequential Price under Curvature Variation]
\label{cor:twofold-loss}
Under the assumptions of Section~\ref{sec:setup}, define
$Q_\tau:=I-P_\tau$. Suppose that, for some $0<m\leq L<\infty$, almost surely,
\[
mQ_\tau\preceq C_\tau\preceq LQ_\tau,
\qquad
\kappa:=\frac{L}{m}.
\]
Then
\[
\frac{R^\star}{\kappa}
\leq\Delta_{\mathrm{seq}}
\leq\kappa R^\star,
\qquad
\left(1+\frac1\kappa\right)R^\star
\leq G_\infty
\leq(1+\kappa)R^\star,
\]
and $F_T\to G_\infty$. In particular, if either
$C_\tau=cQ_\tau$ almost surely for a fixed $c>0$ or
$C_\tau\equiv C\succ0$, then
\[
\Delta_{\mathrm{seq}}=R^\star,
\qquad
G_\infty=2R^\star,
\qquad
F_T\longrightarrow2R^\star.
\]
\end{corollary}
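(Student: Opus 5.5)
The plan is to compare both $\Delta_{\mathrm{seq}}$ and $R^\star$ with one Euclidean quantity, obtained from a stationarity identity. The sandwich $mQ_\tau\preceq C_\tau\preceq LQ_\tau$ then converts each side into the other, with a factor at most $\kappa$.

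\emph{Step 1: write both quantities as expectations over an independent task.} Take $W_\infty\sim\nu$ and an independent $\tau\sim\Pi$. By Theorem~\ref{thm:sequential-price} and the population-loss identity,
\[
\Delta_{\mathrm{seq}}=\mathbb{E}\|W_\infty-w_\star\|_{\overline C}^2=\mathbb{E}\|W_\infty-w_\star\|_{C_\tau}^2 .
\]
Here I condition on $W_\infty$ and use $\mathbb{E}[C_\tau]=\overline C$. Likewise, $R^\star=\mathbb{E}\|Z_\tau-w_\star\|_{C_\tau}^2$. Since $C_\tau$ is sandwiched between $mQ_\tau$ and $LQ_\tau$, and $\|v\|_{Q_\tau}^2=\|Q_\tau v\|_2^2$, we obtain
\[
m\,\mathbb{E}\|Q_\tau(W_\infty-w_\star)\|_2^2\le\Delta_{\mathrm{seq}}\le L\,\mathbb{E}\|Q_\tau(W_\infty-w_\star)\|_2^2,
\]
\[
m\,\mathbb{E}\|Q_\tau(Z_\tau-w_\star)\|_2^2\le R^\star\le L\,\mathbb{E}\|Q_\tau(Z_\tau-w_\star)\|_2^2 .
\]

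\emph{Step 2 (the key step): show the two Euclidean quantities are equal.} Let $W_\infty'=P_\tau W_\infty+Z_\tau$. Since $Z_\tau\in\operatorname{range}(C_\tau)=\operatorname{range}(Q_\tau)$, we have $Z_\tau=Q_\tau Z_\tau$. Hence
\[
W_\infty'-w_\star=P_\tau(W_\infty-w_\star)+Q_\tau(Z_\tau-w_\star),
\]
and the two summands lie in orthogonal subspaces. Pythagoras gives
\[
\|W_\infty'-w_\star\|^2=\|P_\tau(W_\infty-w_\star)\|^2+\|Q_\tau(Z_\tau-w_\star)\|^2 .
\]
Stationarity, $W_\infty'\overset{d}{=}W_\infty$, together with finite second moments, lets us equate $\mathbb{E}\|W_\infty'-w_\star\|^2$ with $\mathbb{E}\|W_\infty-w_\star\|^2=\mathbb{E}\|P_\tau(W_\infty-w_\star)\|^2+\mathbb{E}\|Q_\tau(W_\infty-w_\star)\|^2$. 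Cancelling the finite $P_\tau$ term yields
\[
\mathbb{E}\|Q_\tau(W_\infty-w_\star)\|_2^2=\mathbb{E}\|Q_\tau(Z_\tau-w_\star)\|_2^2 .
\]
I expect this to be the main obstacle. The decomposition has to be chosen so that the cross term vanishes exactly, which uses $Z_\tau=Q_\tau Z_\tau$ and the orthogonality of $P_\tau$ and $Q_\tau$. The integrability needed to cancel the $P_\tau$ term must also be checked, using the finite second moment of $\nu$.

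\emph{Step 3: combine and specialize.} Calling the common value $E$, Step 1 gives $mE\le\Delta_{\mathrm{seq}},R^\star\le LE$, so $R^\star/\kappa\le\Delta_{\mathrm{seq}}\le\kappa R^\star$. Adding $R^\star$ gives the bounds on $G_\infty$, and $F_T\to G_\infty$ is Theorem~\ref{thm:forgetting-loss}.

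If $C_\tau=cQ_\tau$, then $m=L=c$ and both inequalities in Step 1 are equalities, so $\Delta_{\mathrm{seq}}=cE=R^\star$. If $C_\tau\equiv C\succ0$, then $Q_\tau=I$ and $P_\tau=0$, so $W_\infty\overset{d}{=}Z_\tau$. Directly, $\Delta_{\mathrm{seq}}=\mathbb{E}\|Z_\tau-w_\star\|_C^2=R^\star$. This case is not covered by taking $m=L$ unless $C$ is a multiple of $I$, so it must be checked separately. Either way $G_\infty=2R^\star$ and $F_T\to2R^\star$.
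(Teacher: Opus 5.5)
Your proposal is correct and follows the paper's proof essentially step for step. The key identity is the stationary projection balance $\mathbb{E}\|W_\infty-w_\star\|_{\overline Q}^2=\mathbb{E}\|Q_\tau(Z_\tau-w_\star)\|^2$, which comes from the orthogonal decomposition of the centered update. The paper then applies the curvature sandwich to both $R^\star$ and $\Delta_{\mathrm{seq}}$, as you do. Your pointwise sandwich with a fresh task is just the paper's averaged bound $m\overline Q\preceq\overline C\preceq L\overline Q$, and the case $C_\tau\equiv C\succ0$ is likewise handled directly via $P_\tau=0$.

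Your explicit use of $Z_\tau=Q_\tau Z_\tau$ to eliminate the cross term is the precise justification behind the paper's terser appeal to $P_\tau Q_\tau=0$.
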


\paragraph{Proof sketch.}
Let $e_t=W_t-w_\star$ and
$r_\tau=Q_\tau(Z_\tau-w_\star)$. The centered update is
$e_t=P_{\tau_t}e_{t-1}+r_{\tau_t}$, with
$P_{\tau_t}r_{\tau_t}=0$. At stationarity, this orthogonality yields
$\mathbb E\|W_\infty-w_\star\|_{\overline Q}^2
=\mathbb E\|r_\tau\|^2$, where $\overline Q=\mathbb E[Q_\tau]$.
The curvature bounds then compare both $R^\star$ and
$\Delta_{\mathrm{seq}}$ to the same projection-space quantity, giving the
stated inequalities. The condition controls only the nonzero task curvatures:
task ranks and active subspaces may still vary. When $C_\tau=cQ_\tau$, the two
comparisons become equalities even though the projections may remain
nontrivial. The common positive-definite-curvature case also follows
directly because then $P_\tau=0$.
A full proof is given in Appendix~\ref{app:proof-twofold-loss}.

\section{EWC and the Sequential Price}
\label{sec:ewc}

Having separated stationary population loss into intrinsic loss and sequential
price, we analyze how fixed-strength EWC changes the sequential price
\citep{kirkpatrick2017overcoming,schwarz2018progress,heckel2022provable}. We use
the average task curvature $\overline C$ in a fixed population-curvature penalty
and study both the resulting stationary population loss and convergence to stationarity
under the task distribution of Section~\ref{sec:setup}.

For a fixed $\lambda>0$, consider the update
\[
W_t:=\arg\min_w\left\{
\ell_{\tau_t}(w)+\lambda\|w-W_{t-1}\|_{\overline C}^2
\right\}.
\]
The penalty protects the preceding parameter in directions that are important
on average across the task distribution. Its strength remains fixed throughout
the task stream. Let $G_\lambda$ denote the resulting stationary population
loss. For $\eta\in(0,1)$, let $T_{\mathrm{mix}}(\eta,\lambda)$ denote the
task horizon sufficient for any two trajectories exposed to the same task
stream to reduce their expected squared $\overline C$-distance to at most an
$\eta$ fraction of its initial value. The formal coupling definition is given in
Appendix~\ref{app:proof-ewc-price-rate}.

\begin{proposition}[Exact Finite-Strength Characterization]
\label{prop:ewc-exact}
Under the assumptions of Section~\ref{sec:setup}, fixed-strength EWC admits a
unique stationary distribution within the class of distributions with finite
second moment for every $\lambda>0$. When the task distribution has finite
support, its stationary population loss can be computed
exactly by solving a finite system of linear equations determined by the task
distribution. If the task curvatures are diagonal, these equations further
separate into independent scalar equations.
\end{proposition}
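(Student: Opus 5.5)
First I will write the EWC update as a random affine map. The first-order condition $C_\tau(w-Z_\tau)+\lambda\overline C(w-W_{t-1})=0$ gives
\[
W_t=A_tW_{t-1}+b_t,\qquad
A_\tau:=(C_\tau+\lambda\overline C)^{-1}\lambda\overline C,\qquad
b_\tau:=(C_\tau+\lambda\overline C)^{-1}C_\tau Z_\tau.
\]
The inverse exists because $\lambda\overline C\succ0$. The objective is strictly convex, so this minimizer is unique. Since $C_\tau Z_\tau=X_\tau^\top y_\tau/n_\tau$, the shift $b_\tau$ has finite second moment under $\mathbb{E}\|Z_\tau\|_2^2<\infty$ and $\|C_\tau\|_{\mathrm{op}}\le L$. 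The natural geometry is the $\overline C$-inner product, so I will set $\widetilde A_\tau:=\overline C^{1/2}A_\tau\overline C^{-1/2}$. A short computation gives
\[
\widetilde A_\tau=\lambda\bigl(\widetilde C_\tau+\lambda I\bigr)^{-1},
\qquad
\widetilde C_\tau:=\overline C^{-1/2}C_\tau\overline C^{-1/2},
\]
which is symmetric with eigenvalues in $(0,1]$.

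Existence and uniqueness of the stationary law then follow the route of Section~\ref{sec:setup}, as in Appendix~\ref{app:setup-results}. I will show that $\mathcal T_\lambda(M):=\mathbb{E}[\widetilde A_\tau M\widetilde A_\tau]$ is a strict contraction on $\mathbb{S}^d$, either in operator norm on PSD matrices or through a spectral-radius bound.

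\begin{itemize}
\item Because $\widetilde A_\tau\preceq I$, we have $\widetilde A_\tau^2\preceq\widetilde A_\tau$.
\item Convexity of matrix inversion gives $\mathbb{E}[\widetilde A_\tau]\preceq\lambda(\mathbb{E}\widetilde C_\tau+\lambda I)^{-1}=\frac{\lambda}{1+\lambda}I$, using $\mathbb{E}\widetilde C_\tau=I$.
\item Hence $\mathbb{E}[\widetilde A_\tau^2]\preceq\frac{\lambda}{1+\lambda}I$, so $\mathbb{E}\|\widetilde A_\tau v\|^2\le\frac{\lambda}{1+\lambda}\|v\|^2$.
\end{itemize}

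This is a uniform mean-square contraction in the $\overline C$-norm. The standard synchronous-coupling argument then applies. Two trajectories driven by the same tasks contract in $L^2$ at rate $(\lambda/(1+\lambda))^{t}$. The backward iterates $b_{\tau_1}+A_{\tau_1}b_{\tau_2}+\cdots$ converge in $L^2$, and their limit is the unique finite-second-moment stationary law. This step, finding a contraction constant that needs no structure on the $C_\tau$, is the one I expect to be the main obstacle. The operator-convexity bound above is what I would try first.

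For the finite-support computation, the first- and second-moment fixed-point equations suffice, exactly as in Theorem~\ref{thm:sequential-price}:
\[
\mu_\lambda=\mathbb{E}[A_\tau]\mu_\lambda+\mathbb{E}[b_\tau],
\qquad
M_\lambda=\mathbb{E}[A_\tau M_\lambda A_\tau^\top]+\mathbb{E}[A_\tau\mu_\lambda b_\tau^\top+b_\tau\mu_\lambda^\top A_\tau^\top+b_\tau b_\tau^\top],
\]
where $M_\lambda:=\mathbb{E}[WW^\top]$. With finitely many tasks, every expectation is a finite weighted sum, so both equations are finite linear systems in $d+d(d+1)/2$ unknowns. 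They are uniquely solvable, because the spectral radii of $\mathbb{E}A_\tau$ and of $\mathcal T_\lambda$ are below one by the contraction above. The stationary loss is then
\[
G_\lambda=R^\star+\|\mu_\lambda-w_\star\|_{\overline C}^2+\operatorname{tr}\bigl(\overline C(M_\lambda-\mu_\lambda\mu_\lambda^\top)\bigr),
\]
by the population-loss identity of Section~\ref{sec:setup}.

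If every $C_\tau$ is diagonal, then $\overline C$, $A_\tau$ and the products $A_\tau M A_\tau$ all act coordinatewise on diagonal entries. Each coordinate $i$ then evolves as a scalar affine recursion $W^{(i)}_t=a_{\tau,i}W^{(i)}_{t-1}+b_{\tau,i}$, with $a_{\tau,i}=\lambda\overline c_i/(c_{\tau,i}+\lambda\overline c_i)$. The loss $G_\lambda=\sum_i\overline c_i\,\mathbb{E}[(W^{(i)})^2]-2\overline h_iE[W^{(i)}]+\text{const}$ depends only on per-coordinate first and second moments; the off-diagonal second moments are never needed. Those moments solve independent scalar equations $m_i=\mathbb{E}[a]m_i+\mathbb{E}[b]$ and $s_i=\mathbb{E}[a^2]s_i+2\mathbb{E}[ab]m_i+\mathbb{E}[b^2]$. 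These are solvable because $\mathbb{E}[a^2]<1$, which is the scalar case of the contraction bound.
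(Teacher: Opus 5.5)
\textbf{There is a genuine error in the step you flagged as the main obstacle.} Operator Jensen for the operator-convex map $X\mapsto X^{-1}$ reads $(\mathbb{E}X)^{-1}\preceq\mathbb{E}[X^{-1}]$. Applied here it gives
\[
\mathbb{E}[\widetilde A_\tau]=\lambda\,\mathbb{E}\bigl[(\widetilde C_\tau+\lambda I)^{-1}\bigr]\succeq\tfrac{\lambda}{1+\lambda}I,
\]
which is the reverse of what you need. The inequality is strict whenever $\widetilde C_\tau$ is not almost surely constant.

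Your conclusion $\mathbb{E}[\widetilde A_\tau^2]\preceq\frac{\lambda}{1+\lambda}I$ is also false. Take $d=2$ and $C_\tau=2e_ie_i^\top$ with $i\in\{1,2\}$ uniform, so that $\overline C=I$. Then
\[
\mathbb{E}\|\widetilde A_\tau e_1\|^2=\tfrac12+\tfrac12\bigl(\tfrac{\lambda}{2+\lambda}\bigr)^2>\tfrac12\geq\tfrac{\lambda}{1+\lambda}
\quad\text{for }\lambda\le1.
\]
More generally, if a direction is invisible to a task with probability $q$, no per-step mean-square contraction factor below $q$ is possible, whatever $\lambda$ is. This is exactly the Jester regime: the least-covered joke is unseen by roughly three quarters of the tasks, consistent with the paper's $T_{\mathrm{mix}}=16$ at small $\lambda$. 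So no contraction constant depending on $\lambda$ alone can exist. You must use either a nondegeneracy argument or the curvature bound.

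\textbf{The rest of your argument survives once $\rho_\lambda:=\lambda_{\max}(\mathbb{E}[\widetilde A_\tau^2])<1$ is established correctly.} There are two ways to do this.

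\emph{The paper's route.} Argue by contradiction. If $\mathbb{E}\|\widetilde A_\tau v\|^2=1$ for some unit $v$, nonexpansiveness forces $\|\widetilde A_\tau v\|=1$ almost surely. Hence $\widetilde C_\tau v=0$ almost surely, which contradicts $v^\top\mathbb{E}[\widetilde C_\tau]v=1$.

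\emph{Repairing your chain.} Keep $\widetilde A_\tau^2\preceq\widetilde A_\tau$, but replace Jensen by the spectral bound
\[
I-\widetilde A_\tau=\widetilde C_\tau(\widetilde C_\tau+\lambda I)^{-1}\succeq\widetilde C_\tau/(\widetilde L+\lambda),
\]
where $\widetilde L:=L/\lambda_{\min}(\overline C)\geq\|\widetilde C_\tau\|_{\mathrm{op}}$. Averaging gives
\[
\mathbb{E}[\widetilde A_\tau^2]\preceq\mathbb{E}[\widetilde A_\tau]\preceq\bigl(1-\tfrac{1}{\widetilde L+\lambda}\bigr)I.
\]
This is an explicit constant that uses the bounded-curvature assumption. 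It reduces to your $\lambda/(1+\lambda)$ only in the homogeneous case $C_\tau\equiv\overline C$.

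With either repair, the remaining steps go through as you describe:
\begin{itemize}
\item the synchronous coupling and the backward series;
\item invertibility of the mean map and the second-moment map, noting that $-\|X\|_{\mathrm{op}}\mathbb{E}[\widetilde A_\tau^2]\preceq\mathcal T_\lambda(X)\preceq\|X\|_{\mathrm{op}}\mathbb{E}[\widetilde A_\tau^2]$ gives $\|\mathcal T_\lambda(X)\|_{\mathrm{op}}\le\rho_\lambda\|X\|_{\mathrm{op}}$;
\item your finite linear system in raw second moments, which is equivalent to the paper's centered covariance system with forcing $B_{\tau,\lambda}(Z_\tau-\mu_\lambda)$;
\item the coordinatewise separation in the diagonal case.
\end{itemize}
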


The corresponding linear system and proof are given in
Appendix~\ref{app:proof-ewc-price-rate}.

\begin{theorem}[Sequential Price under Fixed-Strength EWC]
\label{thm:ewc-price-rate}
Under the assumptions of Section~\ref{sec:setup}, define
\[
\kappa_\Pi
:=\frac12\mathbb E\!\left[
\|C_\tau(Z_\tau-w_\star)\|_{\overline C^{-1}}^2
\right].
\]
For fixed $\eta\in(0,1)$, as $\lambda\to\infty$,
\[
G_\lambda-R^\star
=\frac{\kappa_\Pi}{\lambda}+O(\lambda^{-2}),
\qquad
T_{\mathrm{mix}}(\eta,\lambda)
=\frac{\lambda}{2}\log\frac1\eta+O(1).
\]
If $R^\star>0$, then $\kappa_\Pi>0$. Thus EWC reduces the stationary sequential
price at rate $1/\lambda$, while the horizon for reducing mean-square
initialization dependence by a fixed factor grows linearly with $\lambda$.
\end{theorem}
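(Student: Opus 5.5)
We will work in whitened coordinates, in which the EWC update becomes a random affine map that is a small perturbation of the identity. From there, the price and the mixing horizon both come from first-order expansions in $1/\lambda$.

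\textbf{The EWC update as a random affine map.} The first-order condition of the EWC objective is $(C_\tau+\lambda\overline C)W_t=C_\tau Z_\tau+\lambda\overline C W_{t-1}$. Subtracting $w_\star$ gives the centered recursion
\[
e_t=\bigl(I-(C_\tau+\lambda\overline C)^{-1}C_\tau\bigr)e_{t-1}+(C_\tau+\lambda\overline C)^{-1}C_\tau(Z_\tau-w_\star),
\qquad e_t:=W_t-w_\star .
\]
We pass to $x_t:=\overline C^{1/2}e_t$ and set $M_\tau:=\overline C^{-1/2}C_\tau\overline C^{-1/2}$ and $g_\tau:=\overline C^{-1/2}C_\tau(Z_\tau-w_\star)$. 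Then $x_t=A_\tau x_{t-1}+b_\tau$, where
\[
A_\tau=\lambda(M_\tau+\lambda I)^{-1},\qquad b_\tau=(M_\tau+\lambda I)^{-1}g_\tau .
\]
These coordinates have four useful properties:
\begin{itemize}
\item $\mathbb E[M_\tau]=I$.
\item $\|M_\tau\|_{\mathrm{op}}\le L\|\overline C^{-1}\|_{\mathrm{op}}$ almost surely.
\item $\mathbb E[g_\tau]=\overline C^{-1/2}(\overline h-\overline Cw_\star)=0$.
\item $\|g_\tau\|_2^2=\|C_\tau(Z_\tau-w_\star)\|_{\overline C^{-1}}^2$, so $\kappa_\Pi=\tfrac12\mathbb E\|g_\tau\|^2$.
\end{itemize}
By the population-loss identity, $G_\lambda-R^\star=\mathbb E\|x_\infty\|_2^2$ under the stationary law. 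Existence and uniqueness of that law come from Proposition~\ref{prop:ewc-exact}.

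\textbf{Expansion of the stationary second moment.} Neumann expansions with the uniform bound on $M_\tau$ give
\[
A_\tau=I-M_\tau/\lambda+O(\lambda^{-2}),\qquad b_\tau=g_\tau/\lambda+O(\lambda^{-2})\|g_\tau\|,
\]
uniformly in $\tau$. Hence $\mathbb E[A_\tau]=I-I/\lambda+O(\lambda^{-2})$, $\mathbb E[b_\tau]=O(\lambda^{-2})$ (using $\mathbb E g_\tau=0$), and
\[
(1-2/\lambda-c\lambda^{-2})I\preceq\mathbb E[A_\tau^\top A_\tau]\preceq(1-2/\lambda+c\lambda^{-2})I .
\]

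\emph{Mean.} The stationary mean $m$ solves $(I-\mathbb E A_\tau)m=\mathbb E b_\tau$, so $m=O(\lambda^{-1})$.

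\emph{Trace identity.} Let $S=\mathbb E[x_\infty x_\infty^\top]$. Stationarity, with $x_\infty$ independent of the new task, gives
\[
\operatorname{tr}S=\operatorname{tr}\bigl(S\,\mathbb E[A_\tau^\top A_\tau]\bigr)+2\,\mathbb E[b_\tau^\top A_\tau]\,m+\mathbb E\|b_\tau\|^2 .
\]
The three terms are controlled as follows.
\begin{itemize}
\item Since $S\succeq0$, the sandwich bound gives $\operatorname{tr}(S\,\mathbb E[A^\top A])=(1-2/\lambda)\operatorname{tr}S+O(\lambda^{-2})\operatorname{tr}S$.
\item The cross term is $O(\lambda^{-3})$: $\mathbb E[A_\tau^\top b_\tau]=\mathbb E b_\tau+O(\lambda^{-2})=O(\lambda^{-2})$ and $m=O(\lambda^{-1})$.
\item $\mathbb E\|b_\tau\|^2=\mathbb E\|g_\tau\|^2/\lambda^2+O(\lambda^{-3})$.
\end{itemize}
A crude first pass of the same identity shows $\operatorname{tr}S=O(\lambda^{-1})$, which makes the $O(\lambda^{-2})\operatorname{tr}S$ remainder $O(\lambda^{-3})$. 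Rearranging then yields
\[
\operatorname{tr}S=\frac{\mathbb E\|g_\tau\|^2}{2\lambda}+O(\lambda^{-2})=\frac{\kappa_\Pi}{\lambda}+O(\lambda^{-2}).
\]

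\textbf{Mixing horizon.} Two trajectories driven by the same stream satisfy $\delta_t=A_{\tau_t}\delta_{t-1}$ in $x$-coordinates, so the expected squared $\overline C$-distance is $\mathbb E\|\delta_t\|_2^2$. Conditioning step by step with the sandwich bound gives
\[
(1-2/\lambda-c\lambda^{-2})^T\,\mathbb E\|\delta_0\|^2\;\le\;\mathbb E\|\delta_T\|^2\;\le\;(1-2/\lambda+c\lambda^{-2})^T\,\mathbb E\|\delta_0\|^2 .
\]
We take $T_{\mathrm{mix}}$ as the least $T$ making the ratio at most $\eta$, and the precise form is fixed by the appendix definition. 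Since $-\log(1-2/\lambda\pm c\lambda^{-2})=2/\lambda+O(\lambda^{-2})$, both bounds give $T=\frac{\lambda}{2}\log\frac1\eta+O(1)$ for fixed $\eta$; the ceiling contributes only $O(1)$.

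\textbf{Positivity and main obstacle.} If $\kappa_\Pi=0$, then $C_\tau(Z_\tau-w_\star)=0$ almost surely. Hence $\ell_\tau(w_\star)=\|w_\star-Z_\tau\|_{C_\tau}^2=0$ almost surely, so $R^\star=0$; contrapositively, $R^\star>0$ forces $\kappa_\Pi>0$. The step I expect to need the most care is making the $O(\lambda^{-2})$ remainder in the trace identity rigorous. This requires the uniform-in-$\tau$ expansion error bounds, which use $\|C_\tau\|_{\mathrm{op}}\le L$ and $\overline C\succ0$, and the a priori bound $\operatorname{tr}S=O(\lambda^{-1})$, which I would obtain first from the upper sandwich bound together with $\mathbb E\|b_\tau\|^2=O(\lambda^{-2})$.
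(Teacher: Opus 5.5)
Your proposal is correct and follows essentially the paper's argument. Both proofs whiten by $\overline C^{1/2}$ and expand the update matrices uniformly in $1/\lambda$, using $\mathbb E[\widehat C_\tau]=I$ and $\mathbb E[C_\tau(Z_\tau-w_\star)]=0$. Both bound the stationary mean offset by $O(\lambda^{-1})$, balance the $2/\lambda$ contraction against the $\mathbb E\|g_\tau\|^2/\lambda^2$ injected noise, and read off the horizon from $\lambda_{\max}(\mathbb E[A_\tau^2])=1-2/\lambda+O(\lambda^{-2})$.

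The only technical difference is how the second moment is handled. You take the trace of the uncentered second-moment identity about $w_\star$ and close it with the scalar sandwich on $\mathbb E[A_\tau^2]$ plus an a priori $O(\lambda^{-1})$ bound. The paper instead inverts the full covariance operator to obtain $\widehat\Sigma_\lambda=\frac{\varepsilon}{2}Q+O(\varepsilon^2)$ and adds the $O(\lambda^{-2})$ squared bias separately. Your route is slightly more elementary, giving only the trace, which is all that is needed. Your two-sided contraction bound also shows that the horizon asymptotics hold whether $T_{\mathrm{mix}}$ is the paper's sufficient horizon $\lceil\log\eta/\log\rho_\lambda\rceil$ or the least $T$ achieving the $\eta$ reduction.
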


Combining the two asymptotic expressions gives
\[
(G_\lambda-R^\star)T_{\mathrm{mix}}(\eta,\lambda)
=\frac{\kappa_\Pi}{2}\log\frac1\eta+O(\lambda^{-1}).
\]
For fixed $\eta$, reducing stationary sequential price by a constant factor
proportionally increases this coupling horizon.

\paragraph{Proof sketch.}
The first-order condition shows directly how $\lambda$ changes one task update:
\[
W_t-W_{t-1}
=-\frac1\lambda\overline C^{-1}C_{\tau_t}
  (W_{t-1}-Z_{\tau_t})+O(\lambda^{-2}).
\]
Thus, in the large-$\lambda$ regime, each incoming task changes the parameter by
order $1/\lambda$. Because
$\mathbb E[C_\tau Z_\tau]=\overline Cw_\star$, averaging over the current task
gives
\[
\mathbb E[W_t-W_{t-1}\mid W_{t-1}]
=-\frac1\lambda(W_{t-1}-w_\star)+O(\lambda^{-2}).
\]
Thus the average update pulls the parameter toward $w_\star$. Applying the same
expansion to two trajectories exposed to the same tasks shows that their
expected squared distance contracts by
$1-2/\lambda+O(\lambda^{-2})$ per task. After $t$ tasks this factor is
approximately $\exp(-2t/\lambda)$, which gives the stated mixing time.

At $w_\star$, a new task injects a random displacement whose expected squared
$\overline C$-norm is
\[
\frac1{\lambda^2}
\mathbb E\|C_\tau(Z_\tau-w_\star)\|_{\overline C^{-1}}^2
+O(\lambda^{-3}).
\]
This is $2\kappa_\Pi/\lambda^2+O(\lambda^{-3})$. If $V_\lambda$ denotes the
stationary variance contribution to population loss, stationarity balances the
error removed by contraction with the noise introduced by the next task:
\[
\frac{2}{\lambda}V_\lambda
=\frac{2\kappa_\Pi}{\lambda^2}+O(\lambda^{-3}).
\]
Hence $V_\lambda=\kappa_\Pi/\lambda+O(\lambda^{-2})$. The squared stationary
mean bias is only $O(\lambda^{-2})$, giving the sequential-price formula. A full
proof is given in Appendix~\ref{app:proof-ewc-price-rate}.

\section{Learning from a Stream of User Preferences}
\label{sec:jester-experiment}

We now apply the theory to Jester, a joke-rating dataset with naturally
conflicting user preferences. Theorems~\ref{thm:forgetting-loss}
and~\ref{thm:sequential-price} apply directly to its user-level tasks.
Proposition~\ref{prop:ewc-exact} gives the finite-$\lambda$ EWC
calculation, while Theorem~\ref{thm:ewc-price-rate} describes its price--rate
scaling.

\begin{figure}[H]
  \centering
  \includegraphics[width=\textwidth]{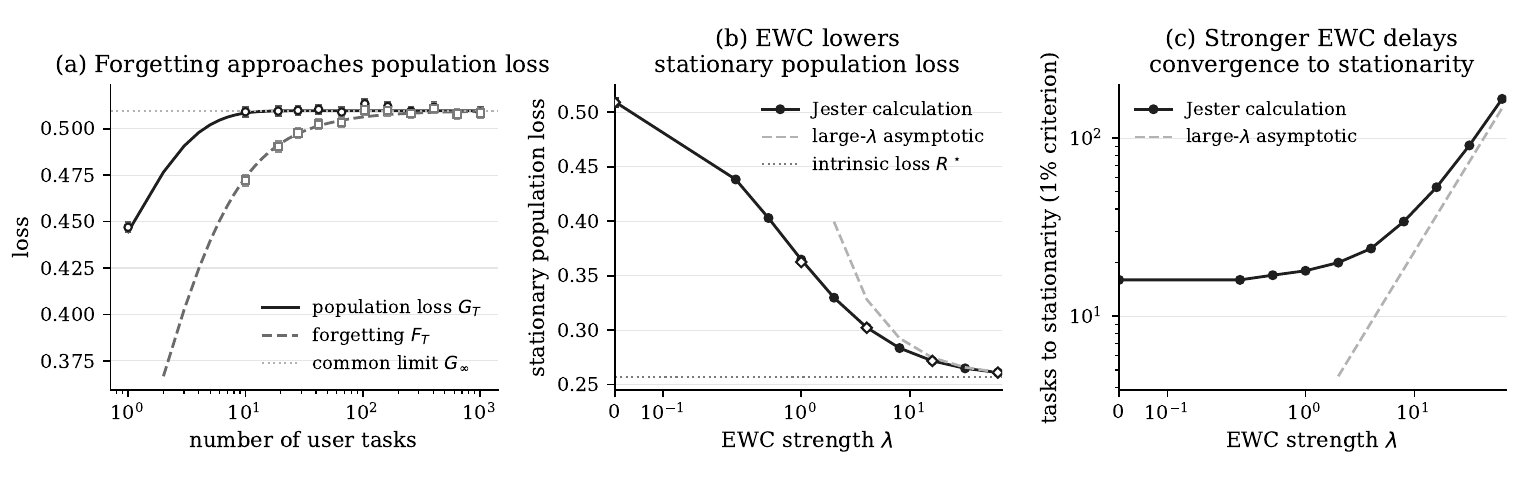}
  \caption{Sequential learning from Jester user preferences. Panel~(a)
  compares the exact forgetting and population-loss curves. Panels~(b)--(c)
  show the stationary population loss and the number of tasks required to reduce the
  initial-state effect to $1\%$ under EWC. In these two panels, solid curves are
  exact finite-distribution calculations, dashed curves show the large-$\lambda$
  expressions in Theorem~\ref{thm:ewc-price-rate}, and the dotted line marks
  $R^\star$. Markers show averages over $2{,}000$ independent task streams in panel~(a)
  and $1{,}000$ in panel~(b). Error bars denote standard errors and are mostly
  smaller than the markers.}
  \label{fig:jester}
\end{figure}

Jester Dataset~1 \citep{goldberg2001eigentaste} contains ratings by $73{,}421$
users on $100$ jokes. One user and all of that user's observed ratings form one
task. We retain the original observation pattern, rescale ratings to $[-1,1]$,
and sample tasks independently and uniformly from the empirical user
distribution.

Let $\mathcal J_i$ be the jokes rated by user $i$, let
$n_i=|\mathcal J_i|$, and let $Z_{ij}$ be the corresponding rating. With
one-hot joke inputs and a single linear layer $w\in\mathbb R^{100}$, task $i$
has loss
\[
\ell_i(w)=\frac{1}{n_i}\sum_{j\in\mathcal J_i}(w_j-Z_{ij})^2.
\]
Its curvature is diagonal, with
$(C_i)_{jj}=\mathbf 1\{j\in\mathcal J_i\}/n_i$. Writing
$Q_i:=\operatorname{diag}(\mathbf 1\{j\in\mathcal J_i\})$, we have
$C_i=Q_i/n_i$. Thus task ranks, observed subspaces, and curvature scales vary
across users, while $\overline C\succ0$ because every joke has positive
coverage. Exact fitting replaces the coordinates rated by the current user and
retains the rest. The task distribution falls within the curvature-variation
regime of Corollary~\ref{cor:twofold-loss} and satisfies the model in
Section~\ref{sec:setup}.

Panel~(a) shows forgetting and population loss approaching the same value, as
predicted by
Theorem~\ref{thm:forgetting-loss}. Joint training attains
$R^\star=0.2569$, whereas sequential exact fitting converges to
$G_\infty=0.5097$, approximately 2 times the joint-training loss
despite heterogeneous task curvatures. Estimates from independently sampled task streams closely
follow these exact curves. Their difference, the sequential price, is $0.2528$
and therefore accounts for nearly half of the stationary population loss.
Appendix Figure~\ref{fig:jester-diagnostics} further resolves this sequential
price into its stationary-mean and stationary-variance components.

We next evaluate EWC on the same user-task distribution. We use the population
mean curvature in the fixed penalty
\[
W_t=\arg\min_w\bigl\{\ell_{\tau_t}(w)
       +\lambda\|w-W_{t-1}\|_{\overline C}^2\bigr\},
\]
which is the update analyzed in Theorem~\ref{thm:ewc-price-rate}. Because the
task curvatures are diagonal, the stationary moment equations in
Appendix~\ref{app:proof-ewc-price-rate} reduce to independent coordinate-wise
calculations.

Panels~(b)--(c) show the resulting price--rate trade-off. At
$\lambda=16$, EWC lowers the stationary population loss from $0.5097$ to
$0.2718$, close to $R^\star=0.2569$, while the number of tasks required by the
$1\%$ criterion increases from $16$ to $53$. The exact finite-distribution
curves agree with independently sampled task streams, and the large-$\lambda$
expressions capture the inverse decrease of the sequential price and the
linear increase in the convergence horizon. Full calculations and simulation
details are given in
Appendix~\ref{app:jester-details}.

Rotated MNIST exhibits corresponding behavior beyond the exact linear model.
With a nonlinear network and finite-step training, historical-task loss
approaches population loss, sequential training retains an excess loss over
joint training, and moderate functional EWC reduces this empirical sequential
price by 95.5\%. Full experimental details are provided in
Appendix~\ref{app:rotated-mnist}.

\FloatBarrier
\section{Discussion and Scope}
\label{sec:discussion}

Continual-learning sequences often arise from a persistent task-generating
source in which the draw of the next task is not determined by the preceding
task. Online systems may receive personalized tasks from a stable user
population, federated or distributed training may sample participants from a
client population in each round, and robots or agents may repeatedly encounter
new task instances from a fixed environment distribution. Similarly, an LLM
data pipeline may continually generate training data by independently sampling
from a prescribed distribution over prompts, topics, or problems. Individual
tasks in these settings can have substantially different inputs and targets,
while their arrivals follow the same relatively stable generating mechanism.
The i.i.d. task distribution in this work abstracts this broad class of
settings. It isolates the recency effect induced by sequential access and
quantifies its additional cost relative to joint training. When future tasks
are instead selected by the current model, its previous performance, or a
designed curriculum, the task-generating mechanism evolves with the learning
history. Such self-improvement loops and nonstationary task streams constitute
important problems beyond the present framework.

Our conclusions depend on the model structure to different degrees. In our
analysis, the forgetting--loss equivalence uses i.i.d. task generation and
stability of the learning dynamics, while its explicit rate further uses the
contractive linear recursion. The sequential price can be defined whenever
sequential training has a stable population-loss limit and joint training
provides a corresponding optimum. These objects can therefore be formulated in
more general continual-learning models. The exact mean--variance
expression, the curvature-dependent bounds, and the EWC price--rate law further
use the quadratic task geometry and the solvable random updates of our model.
The two experiments illustrate these levels. The real Jester user-preference
tasks fall exactly within the theoretical framework, and the calculations
agree with independently sampled task streams. In the Rotated-MNIST experiment
of Appendix~\ref{app:rotated-mnist}, historical-task loss, population loss, and
the empirical sequential price remain consistently measurable under finite-step
nonlinear training, and moderate functional EWC substantially reduces this price. The
U-shaped effect under strong regularization also shows that the exact EWC
asymptotics continue to depend on the dynamics analyzed here.

The analysis begins by asking what forgetting converges to. Forgetting measures the current
model's average loss on past tasks, but this value alone mixes two sources: the
loss that would remain under joint training and the additional loss caused by
sequential updates. Theorem~\ref{thm:forgetting-loss} shows that forgetting and
population loss converge to the same limit. Population loss has an explicit
optimal benchmark, $R^\star$: it is the minimum loss attainable by any shared
parameter vector and the loss asymptotically attained by joint training.
Subtracting $R^\star$ from the common limit separates task incompatibility from
the effect of sequential updates. We define the latter as the sequential price,
making the additional effect of sequential access a distinct object of study.

Having isolated the sequential price, we next study two questions. First, which
properties of the task distribution determine it? We express the sequential
price through the displacement and covariance of the stationary state, and then
bound its magnitude through task curvature. Second, how much sequential price
can a continual-learning method remove, and how does this affect the decay of
initialization dependence? For EWC, we compute both the stationary sequential
price and the coupling horizon at fixed relative precision.

The stationary sequential price describes long-run performance, while the
coupling horizon describes the decay of initialization dependence at a fixed
relative precision. Both quantities are relevant when the task stream has a
finite length.
Extending this analysis to finite-step nonlinear training, other
continual-learning methods such as replay, and nonstationary task streams driven
by the learning history are important next steps. Separating what cannot be
jointly satisfied by the tasks themselves from what is additionally caused by
sequential access provides a basic starting point for understanding and
controlling the cost of continual learning.

\section{Related Work}
\label{sec:related-work}

Theory of forgetting in linear or overparameterized models analyzes
worst-case, random-order, and cyclic projection dynamics for fixed task
collections \citep{evron2022catastrophic,goldfarb2023orthogonal,jeong2023cyclic},
with extensions addressing task similarity, generalization, classification,
and teacher--student dynamics
\citep{lin2023theory,ding2024understanding,asanuma2021statistical}. Most
directly, \emph{From Order to Distribution} studies a task distribution whose
tasks share a solution and characterizes the decay of distribution-level
forgetting \citep{xu2026order}. We retain this distributional viewpoint while
allowing task optima to conflict. The forgetting--loss equivalence then
identifies the generally nonzero limit, with the shared-solution result as its
zero-loss case.

Task order, access patterns, and optimization paths also affect learning
outcomes \citep{li2025taskorder,tsipory2025greedy,hess2024complementary}, while
multi-task learning describes competition through multi-objective optimization
and gradient interference
\citep{sener2018multitask,yu2020gradient,liu2021conflict}. Within our model,
$R^\star$ is the intrinsic loss that remains when one shared model serves the
task distribution, whereas the sequential price is the additional loss caused
by the access pattern and update rule.

Continual-learning methods preserve historical information through replay or
regularization \citep{lopez2017gradient,buzzega2020dark,arani2022learning}. EWC
weights parameter changes by their importance to earlier tasks
\citep{kirkpatrick2017overcoming,schwarz2018progress,benzing2022unifying}. Our
fixed-strength analysis determines the stationary sequential price and
the mean-square coupling horizon from the task distribution. On the Jester
joke-rating dataset, both quantities are computed exactly for sparse,
heterogeneous user tasks and agree with sampled task streams
\citep{goldberg2001eigentaste}.

\subsection*{AI use statement}
Generative AI tools assisted with developing the theoretical formulation,
formulating and checking mathematical claims, proof development, literature
search, and manuscript drafting and editing. All AI-assisted arguments,
citations, and text were manually reviewed. The authors take responsibility
for the final content of this work, including all text and claims produced
with the aid of generative AI.

\bibliography{references}
\bibliographystyle{iclr2027_conference}

\appendix

\section{Clarifying the Scope and Contribution of the Sequential Price}
\label{app:contribution}

This work provides a concrete, computable account of the long-run cost of
continual adaptation, with distinct roles for the individual results and the
established mathematical tools. Theorem~\ref{thm:forgetting-loss} uses task
replacement and contractive coupling to show that the gap between historical
forgetting and population loss is $O(T^{-1})$, enabling direct comparison of the
forgetting plateau under conflicting tasks with the joint-training optimum.
Theorem~\ref{thm:sequential-price} applies the standard quadratic-loss
decomposition and moment equations for random affine recursions
\citep{wang2015exact} to separate precisely the intrinsic loss caused by task
incompatibility from the additional loss incurred by sequential exact fitting.
Building on this characterization, Corollary~\ref{cor:twofold-loss} translates
projection energy balance into a geometric relation between these two losses,
identifying conditions for the twofold-loss law and upper and lower bounds
under curvature variation; Proposition~\ref{prop:curvature-sharpness} further
shows that both bounds can be approached arbitrarily closely and provides a
task family that departs from the twofold relation. For fixed regularization
driven by population curvature, Proposition~\ref{prop:ewc-exact} and
Theorem~\ref{thm:ewc-price-rate} use stationary-moment analysis and a small-step
expansion \citep{dieuleveut2017bridging} to characterize how regularization
strength changes this additional cost and the time required for initialization
effects to decay, with explicit coefficients determined by the task
distribution. Together, these results show how established tools for random
recursions reveal concrete connections among historical forgetting, task
conflict, task geometry, and the cost of continual adaptation.

\section{Additional Related Work}
\label{app:related-work}

Catastrophic forgetting has been studied across task-, domain-, and
class-incremental continual learning, with broad overviews available in prior
surveys \citep{parisi2019continual,delange2021continual,wang2024survey}. The
distinctions among continual-learning scenarios and evaluation protocols are
further developed in prior work
\citep{vandeven2022three,mai2022survey,lesort2023challenging}. Representative
mitigation strategies include
replay \citep{lopez2017gradient,buzzega2020dark,arani2022learning},
importance-based regularization
\citep{kirkpatrick2017overcoming,zenke2017synaptic,aljundi2018memory}, and
recent forgetting-resistant architectures, pretrained-model adaptations, or
Bayesian updates \citep{li2023if2net,peng2025loranpac,bonnet2025bayesian}.
Empirical
analyses have also shown that the observed amount of forgetting depends on the
training regime, optimization path, model width, and evaluation assumptions
\citep{mirzadeh2020understanding,mirzadeh2021lmc,mirzadeh2022wide}. Our focus
is complementary: we quantify the additional stationary population loss induced by a
specified sequential learner in a model where that cost can be computed exactly.

The closest theoretical line studies exact-fit or overparameterized linear
continual learning through projection dynamics and task geometry.
Worst-case, cyclic, and random-order guarantees have been established for this
setting \citep{evron2022catastrophic}, followed by analyses of random orthogonal
tasks, sharper cyclic bounds, and separable classification
\citep{goldfarb2023orthogonal,jeong2023cyclic,evron2023classification}.
Subsequent work treats generalization, task similarity, and linear-regression
geometry \citep{lin2023theory,ding2024understanding,goldfarb2024joint}, as well
as the interaction between similarity and overparameterization
\citep{hiratani2024tasksimilarity,goldfarb2025overparameterization,banayeeanzade2024theoretical}.
Recent studies further examine task order, replay, and regularization schedules
\citep{li2025taskorder,tsipory2025greedy,mahdaviyeh2025replay}, together with
last-iterate convergence and optimal linear regularization
\citep{cai2024lastiterate,levinstein2025optimal,karpel2026l2}. Complementary
theories use neural-tangent overlap or teacher--student models
\citep{doan2021theoretical,lee2021continual,asanuma2021statistical}, while
other analyses study sketched or compact memory and statistical regularization
\citep{heckel2022provable,jung2025classification,jung2025memory}. Broader
perspectives use statistical physics, parabolic equations, and subspace
geometry \citep{mori2025optimal,yang2025parabolic,steele2026subspace}, while
information-theoretic approaches provide another recent viewpoint
\citep{cheng2026context}. Most directly,
\emph{From Order to Distribution} moves from a fixed task list to i.i.d. draws
from a task distribution and characterizes the decay of distribution-level
forgetting under a shared solution \citep{xu2026order}. We retain that
distributional viewpoint while allowing task optima to conflict, which changes
the state dynamics from a homogeneous projection recursion to a driven affine
recursion and makes the limiting plateau the central object.

Our comparison between sequential and joint training is also related to work
on optimization objectives and access patterns. Analyses of evolving tasks,
optimization trajectories, and controlled continual-learning protocols show
that both the objective and the path used to optimize it can affect retained
performance \citep{alvarez2025evolving,hess2024complementary,mori2025optimal}.
Task ordering provides another mechanism through which sequential access
changes learning outcomes
\citep{li2025taskorder,tsipory2025greedy,goldfarb2023orthogonal}. In multi-task
learning, conflicting task objectives have been formalized through
multi-objective optimization and gradient interference
\citep{sener2018multitask,yu2020gradient,liu2021conflict}. Our comparison uses the
stationary population loss of sequential exact fitting. We use $R^\star$ for
the intrinsic loss, which joint empirical risk minimization asymptotically
attains, and define the sequential price as the additional loss of the
specified sequential learner. This separates conflict
intrinsic to the shared model from the additional cost created by the update
rule and access pattern.

Regularization-based continual learning limits movement away from parameters
that encode previous tasks. EWC weights this movement by an empirical Fisher
matrix, and online EWC consolidates the accumulated information into a running
reference state
\citep{kirkpatrick2017overcoming,schwarz2018progress,benzing2022unifying}.
Related structural regularizers refine or compress the importance geometry
\citep{zenke2017synaptic,aljundi2018memory,li2021sketched}, and theoretical
work studies when quadratic or Jacobian-based regularization can prevent
forgetting \citep{heckel2022provable,zhao2024statistical,lin2023theory}.
Levinstein et al. study vanishing forgetting for jointly realizable tasks under
fixed or increasing isotropic regularization \citep{levinstein2025optimal},
while Karpel et al. study high-dimensional generalization and horizon-dependent
isotropic $L_2$ regularization \citep{karpel2026l2}. We instead allow
conflicting tasks and characterize stationary sequential price and convergence
to stationarity under a fixed population-curvature penalty.

Technically, our recursion is connected to random affine systems and stochastic
projection methods. Contractive iterated random functions and affine
stochastic equations provide general tools for invariant laws and stationary
moments \citep{brandt1986stochastic,diaconis1999iterated}.
Randomized Kaczmarz and related projection methods study convergence under
successive compatible constraints
\citep{strohmer2009randomized,needell2010randomized,needell2014paved}, with
additional connections to alternating projections and normalized adaptive
filtering \citep{gina2018method,oswald2015convergence,sankaran2000convergence}.
We use these tools to quantify how sequential exact fitting changes the loss
induced by task conflict. The resulting analysis connects historical-task loss
to the joint-training optimum through stationary population loss, yields
curvature-dependent bounds and an exact twofold-loss law in homogeneous
settings, and quantifies the EWC price--rate trade-off.

\section{Auxiliary Results for the Setup}
\label{app:setup-results}

We begin with two elementary facts that formalize the interpretation of
$R^\star$ used in the main text.

\begin{lemma}[Population optimum and task compatibility]
\label{lem:population-optimum}
Under the assumptions of Section~\ref{sec:setup},
\[
R(w)=R^\star+\|w-w_\star\|_{\overline C}^2
\qquad\text{for every }w\in\mathbb{R}^d.
\]
Hence $w_\star$ is the unique minimizer of $R$. If $W$ is the possibly
random output of any learning procedure and $\widetilde\tau\sim\Pi$ is an
independent evaluation task, then
\[
\mathbb{E}[\ell_{\widetilde\tau}(W)]
=\mathbb{E}[R(W)]\geq R^\star,
\]
whenever the expectation is finite, with equality if and only if
$W=w_\star$ almost surely. Moreover,
\[
R^\star=0
\quad\Longleftrightarrow\quad
\text{there exists }w^\circ\in\mathbb{R}^d\text{ such that }
X_\tau w^\circ=y_\tau\text{ for }\Pi\text{-almost every }\tau.
\]
\end{lemma}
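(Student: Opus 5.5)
The plan is to establish the identity by direct expansion, derive the lower bound for randomized learners from it, and then prove the characterization of $R^\star=0$ in both directions.

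\textbf{Identity.} Since $\|C_\tau\|_{\mathrm{op}}\le L$ and $\mathbb{E}\|Z_\tau\|_2^2<\infty$, the quantities $\mathbb{E}[Z_\tau^\top C_\tau Z_\tau]$, $\overline C$ and $\overline h$ are all finite. Expanding $\ell_\tau(w)=\|w-Z_\tau\|_{C_\tau}^2$ and taking expectations gives
\[
R(w)=\mathbb{E}[Z_\tau^\top C_\tau Z_\tau]+w^\top\overline Cw-2w^\top\overline h .
\]
Substituting $\overline h=\overline Cw_\star$ and completing the square yields $R(w)=c+\|w-w_\star\|_{\overline C}^2$ for a constant $c$. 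Evaluating at $w_\star$ shows $c=R^\star$. Because $\overline C\succ0$, the second term vanishes only at $w_\star$, so $w_\star$ is the unique minimizer.

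\textbf{Randomized learners.} Let $W$ be any random output and let $\widetilde\tau$ be independent of $W$. Conditioning on $W$ and using Fubini/Tonelli (the integrand is nonnegative) gives $\mathbb{E}[\ell_{\widetilde\tau}(W)\mid W]=R(W)$, hence $\mathbb{E}[\ell_{\widetilde\tau}(W)]=\mathbb{E}[R(W)]$. Applying the identity pointwise,
\[
\mathbb{E}[R(W)]=R^\star+\mathbb{E}\|W-w_\star\|_{\overline C}^2\ge R^\star .
\]
Equality holds iff $\|W-w_\star\|_{\overline C}^2=0$ a.s., which by positive definiteness of $\overline C$ means $W=w_\star$ a.s.

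\textbf{Characterization of $R^\star=0$.} For ($\Leftarrow$), if a common solution $w^\circ$ exists, then $R(w^\circ)=0$, so $R^\star\le 0$; since $R^\star\ge0$, we get $R^\star=0$. For ($\Rightarrow$), if $R^\star=0$, then $\mathbb{E}[\ell_\tau(w_\star)]=0$ with a nonnegative integrand, so $\ell_\tau(w_\star)=0$ for $\Pi$-a.e.\ $\tau$. This gives $\|X_\tau w_\star-y_\tau\|^2=0$, i.e.\ $X_\tau w_\star=y_\tau$ a.s., and we may take $w^\circ=w_\star$.

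\textbf{Main obstacle.} The only delicate point is the measure-theoretic bookkeeping in the randomized-learner step. There we need to justify exchanging the conditional expectation with the loss using independence and nonnegativity. We also need the finiteness caveat: if $\mathbb{E}[R(W)]=\infty$, the inequality is trivially true. Everything else is routine algebra.
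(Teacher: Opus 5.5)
Your proposal is correct and follows essentially the same route as the paper. Like the paper, you expand the quadratic using $\overline h=\overline Cw_\star$, apply the identity pointwise to $W$ with independence of $\widetilde\tau$ for the lower bound, and use nonnegativity of $\ell_\tau(w_\star)$ to show $w_\star$ is itself a common solution; your added integrability and finiteness remarks are sound refinements.
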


\begin{proof}
Using $\overline h=\overline Cw_\star$ and expanding the quadratic loss gives
\begin{align*}
R(w)
&=\mathbb{E}[Z_\tau^\top C_\tau Z_\tau]
  +w^\top\overline Cw-2w^\top\overline h \\
&=R(w_\star)+(w-w_\star)^\top\overline C(w-w_\star).
\end{align*}
Because $\overline C\succ0$, the second term is nonnegative and vanishes only
at $w=w_\star$. Applying this pointwise identity to the random vector $W$ and
then taking expectations proves the lower bound and its equality condition.
Independence of $\widetilde\tau$ gives
$\mathbb{E}[\ell_{\widetilde\tau}(W)]=\mathbb{E}[R(W)]$.

If a common solution $w^\circ$ exists almost surely, then
$R(w^\circ)=0$, and therefore $R^\star=0$. Conversely, if $R^\star=0$, then
\[
0=R(w_\star)=\mathbb{E}[\ell_\tau(w_\star)].
\]
The integrand is nonnegative, so $\ell_\tau(w_\star)=0$ almost surely, which
is equivalent to $X_\tau w_\star=y_\tau$ almost surely. Thus $w_\star$ itself
is a common solution.
\end{proof}

\begin{lemma}[Consistency of joint training]
\label{lem:joint-consistency}
Under the assumptions of Section~\ref{sec:setup}, let
\[
\widehat W_T^{\mathrm{joint}}
\in\arg\min_w\frac1T\sum_{t=1}^T\ell_{\tau_t}(w).
\]
Then, almost surely, the empirical objective has the unique minimizer
\[
\widehat W_T^{\mathrm{joint}}
=\overline C_T^{-1}\overline h_T
\]
for all sufficiently large $T$, where
\[
\overline C_T:=\frac1T\sum_{t=1}^T C_t,
\qquad
\overline h_T:=\frac1T\sum_{t=1}^T C_tZ_t.
\]
Moreover,
\[
\widehat W_T^{\mathrm{joint}}\longrightarrow w_\star,
\qquad
R(\widehat W_T^{\mathrm{joint}})\longrightarrow R^\star
\qquad\text{almost surely}.
\]
\end{lemma}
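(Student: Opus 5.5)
The plan is to apply the strong law of large numbers entrywise to $\overline C_T$ and $\overline h_T$, and then pass to the minimizer by continuity of matrix inversion.

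\emph{Integrability.} The entries of $C_\tau$ are bounded by $L$ because $\|C_\tau\|_{\mathrm{op}}\le L$ almost surely, so $\mathbb E\|C_\tau\|_{\mathrm{op}}<\infty$. For the vector $C_\tau Z_\tau$, I would use
\[
\mathbb E\|C_\tau Z_\tau\|_2\le L\,\mathbb E\|Z_\tau\|_2\le L\bigl(\mathbb E\|Z_\tau\|_2^2\bigr)^{1/2}<\infty .
\]
Since the tasks are i.i.d., the SLLN then gives $\overline C_T\to\overline C$ and $\overline h_T\to\overline h$ almost surely, entrywise and hence in norm.

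\emph{Eventual invertibility and the formula.} Fix an event of probability one on which both limits hold. Since $\overline C\succ0$, we have $\lambda_{\min}(\overline C)>0$. Eigenvalues are Lipschitz in the operator norm, so $\lambda_{\min}(\overline C_T)\ge\lambda_{\min}(\overline C)/2>0$ for all sufficiently large $T$ (the threshold may depend on the sample path).

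For such $T$, the empirical objective expands as
\[
\frac1T\sum_{t=1}^T\ell_{\tau_t}(w)=w^\top\overline C_Tw-2w^\top\overline h_T+\mathrm{const},
\]
using $\ell_\tau(w)=\|w-Z_\tau\|_{C_\tau}^2$. This is a strictly convex quadratic. Its first-order condition gives the unique minimizer $\overline C_T^{-1}\overline h_T$, which is the stated formula.

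\emph{Convergence.} Inversion is continuous on positive definite matrices, so $\overline C_T^{-1}\to\overline C^{-1}$ on this event. Hence
\[
\widehat W_T^{\mathrm{joint}}=\overline C_T^{-1}\overline h_T\to\overline C^{-1}\overline h=w_\star .
\]
Finally, Lemma~\ref{lem:population-optimum} gives
\[
R(\widehat W_T^{\mathrm{joint}})-R^\star=\|\widehat W_T^{\mathrm{joint}}-w_\star\|_{\overline C}^2\le\|\overline C\|_{\mathrm{op}}\,\|\widehat W_T^{\mathrm{joint}}-w_\star\|_2^2\to0 ,
\]
which proves the second claim.

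\emph{Main obstacle.} The only delicate point is uniqueness for small $T$. When few tasks have been seen, $\overline C_T$ can be singular, and the argmin is then a whole affine set. The statement avoids this by asserting uniqueness only for all sufficiently large $T$. The argument must therefore be carried out pathwise on the probability-one event above, with no claim of uniformity in the threshold for $T$.
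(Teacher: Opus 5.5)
Your proposal is correct and follows essentially the same route as the paper: entrywise SLLN for $\overline C_T$ and $\overline h_T$ (with integrability of $C_\tau Z_\tau$ from the operator-norm bound and the second moment of $Z_\tau$), eventual positive definiteness by continuity of the smallest eigenvalue, the quadratic expansion giving the unique minimizer $\overline C_T^{-1}\overline h_T$, continuity of inversion, and Lemma~\ref{lem:population-optimum} for the loss. Your Cauchy--Schwarz integrability bound and your remark that the threshold in $T$ depends on the sample path only make explicit details that the paper leaves implicit.
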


\begin{proof}
The assumptions $\|C_\tau\|_{\mathrm{op}}\leq L$ almost surely and
$\mathbb{E}\|Z_\tau\|_2^2<\infty$ imply
$\mathbb{E}\|C_\tau Z_\tau\|_2<\infty$. The strong law of large numbers,
applied entrywise in finite dimension, therefore gives
\[
\overline C_T\longrightarrow\overline C,
\qquad
\overline h_T\longrightarrow\overline h
\qquad\text{almost surely}.
\]
Since $\overline C\succ0$, continuity of the smallest eigenvalue implies that
$\overline C_T\succ0$ for all sufficiently large $T$, almost surely. Expanding
the empirical objective then shows that its unique minimizer is
$\overline C_T^{-1}\overline h_T$. Continuity of matrix inversion on the
positive-definite cone yields
\[
\widehat W_T^{\mathrm{joint}}
=\overline C_T^{-1}\overline h_T
\longrightarrow
\overline C^{-1}\overline h=w_\star
\qquad\text{almost surely}.
\]
Finally, Lemma~\ref{lem:population-optimum} gives
\[
R(\widehat W_T^{\mathrm{joint}})-R^\star
=\|\widehat W_T^{\mathrm{joint}}-w_\star\|_{\overline C}^2
\longrightarrow0.
\]
\end{proof}

We next justify why positive definiteness of the average curvature is only an
identifiability condition on the effective parameter space.

\begin{lemma}[Invisible directions]
\label{lem:invisible-directions}
Let $C_\tau\succeq0$ and $\overline C=\mathbb{E}[C_\tau]$. Then
\[
\ker(\overline C)
=\{v\in\mathbb{R}^d:C_\tau v=0\ \text{almost surely}\}.
\]
Consequently, for every $v\in\ker(\overline C)$,
$\ell_\tau(w+v)=\ell_\tau(w)$ and $P_\tau v=v$ almost surely. Thus, if
$\overline C$ is singular, the analysis may be restricted to
$\operatorname{range}(\overline C)=\ker(\overline C)^\perp$ without changing
any task loss.
\end{lemma}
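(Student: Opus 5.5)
The plan is to prove the two inclusions of the kernel identity separately and then read off the consequences. For the inclusion ``$\supseteq$'', suppose $C_\tau v=0$ almost surely. Then linearity of expectation gives $\overline Cv=\mathbb{E}[C_\tau v]=0$. For the inclusion ``$\subseteq$'', take $v\in\ker(\overline C)$ and use the quadratic form:
\[
0=v^\top\overline Cv=\mathbb{E}[v^\top C_\tau v].
\]
Since $C_\tau\succeq0$, the integrand is nonnegative, so $v^\top C_\tau v=0$ almost surely. For a positive semidefinite matrix, $v^\top C_\tau v=\|C_\tau^{1/2}v\|_2^2=0$ forces $C_\tau^{1/2}v=0$, and hence $C_\tau v=0$. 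This holds almost surely for each fixed $v$, which is exactly what the set on the right-hand side requires. I expect this step to be the main obstacle, but only a mild one: it is the passage from a vanishing quadratic form to a vanishing matrix–vector product, and it rests entirely on positive semidefiniteness. Integrability is not an issue, because $\|C_\tau\|_{\mathrm{op}}\le L$ almost surely.

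For the consequences, fix $v\in\ker(\overline C)$. Almost surely $C_\tau v=0$, so $v\in\ker(C_\tau)$, and $P_\tau$ is the orthogonal projector onto $\ker(C_\tau)$, which gives $P_\tau v=v$. For the loss, use $\ell_\tau(w)=\|w-Z_\tau\|_{C_\tau}^2$ and expand:
\[
\ell_\tau(w+v)=\ell_\tau(w)+2v^\top C_\tau(w-Z_\tau)+v^\top C_\tau v.
\]
Both extra terms vanish because $C_\tau v=0$ and $C_\tau$ is symmetric. One could instead argue from $X_\tau v=0$, which follows from $\ker C_\tau=\ker X_\tau$.

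For the restriction claim, decompose $\mathbb{R}^d=\operatorname{range}(\overline C)\oplus\ker(\overline C)$. The decomposition is orthogonal because $\overline C$ is symmetric. By the previous step, every task loss depends on $w$ only through its component in $\operatorname{range}(\overline C)$. It remains to check that the dynamics respect the splitting. Since $Z_\tau\in\operatorname{range}(C_\tau)=\ker(C_\tau)^\perp$ and $\ker(\overline C)\subseteq\ker(C_\tau)$ almost surely, we get $Z_\tau\perp\ker(\overline C)$. Moreover, $P_\tau$ acts as the identity on $\ker(\overline C)$ and is symmetric, so it maps $\operatorname{range}(\overline C)$ into itself. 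Hence the update $W_t=P_tW_{t-1}+Z_t$ leaves the $\ker(\overline C)$ component fixed and evolves the $\operatorname{range}(\overline C)$ component autonomously. On that subspace the restricted mean curvature is positive definite, so analyzing the restricted system changes no task loss.
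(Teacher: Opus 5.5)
Your proposal is correct and follows the paper's own route: the identity $v^\top\overline C v=\mathbb{E}[v^\top C_\tau v]$ with a nonnegative integrand, then positive semidefiniteness to pass from $v^\top C_\tau v=0$ to $C_\tau v=0$, then the loss formula and the projector characterization of $P_\tau$ for the consequences. Your additional check that $Z_\tau\perp\ker(\overline C)$ and that $P_\tau$ preserves $\operatorname{range}(\overline C)$, so that the recursion decouples, supplies a detail the paper leaves implicit in its restriction claim.
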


\begin{proof}
For any fixed $v$,
\[
v^\top\overline C v=\mathbb{E}[v^\top C_\tau v].
\]
The integrand is nonnegative. Hence $v^\top\overline C v=0$ if and only if
$v^\top C_\tau v=0$ almost surely, which, because $C_\tau\succeq0$, is
equivalent to $C_\tau v=0$ almost surely. The two remaining claims follow from
$\ell_\tau(w)=\|w-Z_\tau\|_{C_\tau}^2$ and from the fact that $P_\tau$ is the
orthogonal projector onto $\ker(C_\tau)$.
\end{proof}

\begin{lemma}[Collective coverage implies mean-square contraction]
\label{lem:coverage-contraction}
Suppose that $\overline C\succ0$. On $\mathbb{S}^d$, equipped with the
Frobenius norm, define
\[
\mathcal{S}_\Pi(A):=\mathbb{E}[P_\tau A P_\tau].
\]
Then
\[
\rho_\Pi:=\|\mathcal{S}_\Pi\|_{\mathrm{op},F}<1.
\]
In particular, $I-\mathcal{S}_\Pi$ is invertible.
\end{lemma}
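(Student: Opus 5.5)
To prove Lemma~\ref{lem:coverage-contraction}, I would first bound $\|\mathcal{S}_\Pi(A)\|_F^2$ for a fixed symmetric $A$, and then upgrade the resulting strict inequality to a uniform one by compactness of the Frobenius unit sphere in $\mathbb{S}^d$.

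\textbf{Step 1: non-expansiveness.} Each $P_\tau$ is an orthogonal projector, so $\|P_\tau AP_\tau\|_F\le\|A\|_F$. Since the Frobenius norm is convex, Jensen gives
\[
\|\mathcal{S}_\Pi(A)\|_F\le\mathbb{E}\|P_\tau AP_\tau\|_F\le\|A\|_F .
\]
Hence $\rho_\Pi\le1$. The expectation is well defined because the integrand is bounded.

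\textbf{Step 2: equality forces $A=0$.} Suppose $\|\mathcal{S}_\Pi(A)\|_F=\|A\|_F$ for some $A\neq0$. Then both inequalities in Step~1 are tight. Tightness of the second inequality gives $\|P_\tau AP_\tau\|_F=\|A\|_F$ almost surely. Write $Q_\tau=I-P_\tau$ and decompose
\[
\|A\|_F^2=\|P_\tau AP_\tau\|_F^2+2\|P_\tau AQ_\tau\|_F^2+\|Q_\tau AQ_\tau\|_F^2 .
\]
Tightness then forces $AQ_\tau=0$ almost surely, i.e.\ $A$ vanishes on $\operatorname{range}(Q_\tau)=\operatorname{range}(C_\tau)$. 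Consequently $AC_\tau=0$ almost surely. Taking expectations gives $A\overline C=0$, and since $\overline C\succ0$, this forces $A=0$, a contradiction. So for every nonzero $A$,
\[
\|\mathcal{S}_\Pi(A)\|_F<\|A\|_F .
\]
To reach this contradiction we only need tightness of the second inequality, not of the Jensen step.

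\textbf{Step 3: uniformity.} $\mathcal{S}_\Pi$ is a linear operator on the finite-dimensional space $\mathbb{S}^d$, hence continuous. The map $A\mapsto\|\mathcal{S}_\Pi(A)\|_F$ therefore attains its maximum on the compact unit sphere $\{\|A\|_F=1\}$. By Step~2 that maximum is strictly less than $1$, so $\rho_\Pi<1$.

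Invertibility of $I-\mathcal{S}_\Pi$ then follows from the Neumann series $\sum_{k\ge0}\mathcal{S}_\Pi^k$, which converges because $\|\mathcal{S}_\Pi^k\|\le\rho_\Pi^k$.

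The only delicate point is Step~2: extracting $AQ_\tau=0$ almost surely from tightness, which requires using the block decomposition of $\|A\|_F^2$ rather than the Jensen step.
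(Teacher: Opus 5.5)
Your proof is correct and is essentially the paper's argument: non-expansiveness of each map $A\mapsto P_\tau AP_\tau$, an equality case that forces $P_\tau AP_\tau=A$ (equivalently $AQ_\tau=0$) almost surely, and a contradiction with $\overline C\succ0$, with finite dimensionality giving the uniform bound. The differences are cosmetic. The paper notes that $\mathcal{S}_\Pi$ is an average of Frobenius-orthogonal projectors, hence self-adjoint and positive semidefinite, so norm one would yield a fixed point $A$; you instead use compactness of the unit sphere, and you finish via $A\overline C=0$ rather than by choosing a nonzero $v\in\operatorname{range}(A)$.
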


\begin{proof}
For each task, the map $\mathcal{Q}_\tau(A):=P_\tau A P_\tau$ is an
orthogonal projector on the matrix space $\mathbb{S}^d$ under the Frobenius
inner product. Therefore $\mathcal{S}_\Pi=\mathbb{E}[\mathcal{Q}_\tau]$ is
self-adjoint, positive semidefinite, and has operator norm at most one.

Suppose for contradiction that
$\|\mathcal{S}_\Pi\|_{\mathrm{op},F}=1$. Since $\mathbb{S}^d$ is finite
dimensional, there is a symmetric matrix $A$ with $\|A\|_F=1$ such that
$\mathcal{S}_\Pi(A)=A$. Hence
\[
1
=\langle A,\mathcal{S}_\Pi(A)\rangle_F
=\mathbb{E}\|P_\tau A P_\tau\|_F^2.
\]
Because $\|P_\tau A P_\tau\|_F\leq\|A\|_F=1$, equality implies
$P_\tau A P_\tau=A$ almost surely. Choose a nonzero
$v\in\operatorname{range}(A)$. Then $P_\tau v=v$ almost surely, so
$C_\tau v=0$ almost surely. It follows that
$v^\top\overline C v=0$, contradicting $\overline C\succ0$. Thus
$\rho_\Pi<1$. The Neumann series
$(I-\mathcal{S}_\Pi)^{-1}=\sum_{k\geq0}\mathcal{S}_\Pi^k$ then converges in
operator norm.
\end{proof}

\begin{lemma}[Collective coverage implies mean contraction]
\label{lem:mean-contraction}
Suppose that $\overline C\succ0$ and let
$\overline P:=\mathbb{E}[P_\tau]$. Then
\[
\|\overline P\|_{\mathrm{op}}<1.
\]
Consequently, $I-\overline P$ is invertible.
\end{lemma}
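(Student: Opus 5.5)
The plan is to mirror the argument of Lemma~\ref{lem:coverage-contraction}, replacing the matrix-space projector $\mathcal{Q}_\tau$ by the vector-space projector $P_\tau$ itself.

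First, $\overline P=\mathbb{E}[P_\tau]$ is an average of orthogonal projectors, so it is symmetric and positive semidefinite. Its operator norm is at most one, because $v^\top\overline P v=\mathbb{E}\|P_\tau v\|_2^2\leq\|v\|_2^2$. Consequently $\|\overline P\|_{\mathrm{op}}$ equals the largest eigenvalue of $\overline P$, which lies in $[0,1]$. It remains only to exclude the value $1$.

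Suppose, for contradiction, that $\|\overline P\|_{\mathrm{op}}=1$. Since $\mathbb{R}^d$ is finite dimensional, there is a unit vector $v$ with $\overline P v=v$, and then
\[
1=v^\top\overline P v=\mathbb{E}\|P_\tau v\|_2^2 .
\]
The integrand satisfies $\|P_\tau v\|_2^2\leq1$ pointwise, so equality forces $\|P_\tau v\|_2=\|v\|_2$ almost surely. For an orthogonal projector this means $P_\tau v=v$, that is, $v\in\ker(C_\tau)$ almost surely. Hence $C_\tau v=0$ almost surely, and so $v^\top\overline C v=0$. This contradicts $\overline C\succ0$, which proves $\|\overline P\|_{\mathrm{op}}<1$.

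Invertibility of $I-\overline P$ then follows from the convergent Neumann series $\sum_{k\geq0}\overline P^k$, or directly from the fact that the eigenvalues of $I-\overline P$ are at least $1-\|\overline P\|_{\mathrm{op}}>0$. Alternatively, one can note that $I-\overline P=\mathbb{E}[C_\tau^\dagger C_\tau]$ and show directly that it is positive definite. The one step that needs care is the equality case of $\|P_\tau v\|\leq\|v\|$, which rests on the Pythagorean identity $\|v\|^2=\|P_\tau v\|^2+\|(I-P_\tau)v\|^2$. I do not expect any real obstacle.
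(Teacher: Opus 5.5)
Your proposal is correct and follows essentially the same route as the paper. Both arguments bound $\|\overline P\|_{\mathrm{op}}\le 1$ using the projector property, take a unit vector $v$ with $\overline Pv=v$ in the equality case, deduce $P_\tau v=v$ and hence $C_\tau v=0$ almost surely (contradicting $\overline C\succ0$), and conclude invertibility from the Neumann series; your explicit use of the Pythagorean identity for the equality case is only a detail the paper leaves implicit.
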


\begin{proof}
Every $P_\tau$ is an orthogonal projector, so $\overline P$ is symmetric,
positive semidefinite, and has operator norm at most one. Suppose that its
operator norm equals one. Then there is a unit vector $v$ such that
$\overline Pv=v$, and hence
\[
1=v^\top\overline Pv
=\mathbb{E}[v^\top P_\tau v]
=\mathbb{E}\|P_\tau v\|_2^2.
\]
Since $\|P_\tau v\|_2\leq1$, equality implies
$P_\tau v=v$ almost surely. Thus $v\in\ker(C_\tau)$ almost surely and
$v^\top\overline Cv=0$, contradicting $\overline C\succ0$. Therefore
$\|\overline P\|_{\mathrm{op}}<1$, and the inverse is given by the convergent
Neumann series $(I-\overline P)^{-1}=\sum_{k\geq0}\overline P^k$.
\end{proof}

\begin{corollary}[Existence and uniqueness of the stationary law]
\label{cor:stationary-law}
Under the assumptions of Section~\ref{sec:setup}, the affine recursion
$W_t=P_tW_{t-1}+Z_t$ admits a unique invariant distribution within the class of
distributions with finite second moment.
\end{corollary}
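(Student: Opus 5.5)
We prove existence by exhibiting the stationary law as the law of a backward (Letac-type) iterate, and uniqueness by a synchronous coupling argument driven by the contraction of Lemma~\ref{lem:coverage-contraction}. The plan has three steps.

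\textbf{Step 1 (second-moment contraction).} For two trajectories $W_t,W_t'$ driven by the same tasks, the difference $D_t:=W_t-W_t'$ satisfies $D_t=P_tD_{t-1}$. Conditioning on the past gives
\[
\mathbb{E}[D_tD_t^\top]=\mathcal{S}_\Pi\bigl(\mathbb{E}[D_{t-1}D_{t-1}^\top]\bigr).
\]
Since $\mathbb{E}\|D_t\|_2^2=\operatorname{tr}\mathbb{E}[D_tD_t^\top]\le\sqrt d\,\|\mathbb{E}[D_tD_t^\top]\|_F$, we obtain
\[
\mathbb{E}\|D_t\|_2^2\le\sqrt d\,\rho_\Pi^t\,\mathbb{E}\|D_0\|_2^2.
\]
This uses only $\rho_\Pi<1$ from Lemma~\ref{lem:coverage-contraction}.

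\textbf{Step 2 (existence).} Let $\tau_1,\tau_2,\ldots$ be i.i.d.\ from $\Pi$ and form the backward composition
\[
V_n:=S_{\tau_1}\circ\cdots\circ S_{\tau_n}(0)=\sum_{k=1}^n P_1\cdots P_{k-1}Z_k .
\]
Each summand satisfies
\[
\mathbb{E}\|P_1\cdots P_{k-1}Z_k\|^2=\operatorname{tr}\,\mathcal{S}_\Pi^{k-1}\bigl(\mathbb{E}[Z_kZ_k^\top]\bigr),
\]
because $Z_k$ is independent of $P_1,\dots,P_{k-1}$. The right-hand side is at most $\sqrt d\,\rho_\Pi^{k-1}\,\mathbb{E}\|Z_\tau\|^2$. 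Hence the $L^2$ norms of the summands decay geometrically, and $V_n$ converges in $L^2$ and almost surely to some $V_\infty$ with finite second moment. The identity $V_\infty=S_{\tau_1}(V_\infty')$, where $V_\infty'$ is the analogous limit built from $\tau_2,\tau_3,\ldots$, has the same law as $V_\infty$ and is independent of $\tau_1$. This shows that the law of $V_\infty$ is invariant.

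\textbf{Step 3 (uniqueness).} Take two invariant laws $\nu_1,\nu_2$ with finite second moments, draw independent $W_0\sim\nu_1$ and $W_0'\sim\nu_2$, and run both recursions with a common task stream. By invariance, $W_t\sim\nu_1$ and $W_t'\sim\nu_2$ for every $t$. By Step 1, $\mathbb{E}\|W_t-W_t'\|^2\to0$, so the Wasserstein-2 distance between $\nu_1$ and $\nu_2$ is zero and the two laws coincide.

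\textbf{Main obstacle.} The only delicate point is the conversion between the Frobenius contraction and a bound on vector second moments. The trace bound used in Step 1 handles it. We also need the conditional-independence structure that justifies pulling $\mathcal{S}_\Pi$ through the expectation: the task $\tau_t$ is independent of $D_{t-1}$, and in Step 2 $Z_k$ is independent of $P_1,\dots,P_{k-1}$.
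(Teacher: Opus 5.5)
Your proof is correct and takes essentially the same route as the paper. Your backward composition $\sum_k P_1\cdots P_{k-1}Z_k$ is the paper's negative-time series $Z_0+\sum_{j\geq1}P_0\cdots P_{-j+1}Z_{-j}$ with the indices relabeled, controlled by the same trace bound through $\mathcal{S}_\Pi^{k-1}$. Uniqueness is the same synchronous coupling; your Wasserstein-2 phrasing only makes explicit the final step from $\mathbb{E}\|W_t-W_t'\|^2\to0$ to equality of the two laws.
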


\begin{proof}
Extend the i.i.d. task sequence to integer times and set
\[
W_0^{(\infty)}
:=Z_0+
\sum_{j=1}^{\infty}
P_0P_{-1}\cdots P_{-j+1}Z_{-j}.
\]
Let $M_Z:=\mathbb{E}[Z_\tau Z_\tau^\top]$ and
$\rho_\Pi=\|\mathcal{S}_\Pi\|_{\mathrm{op},F}<1$. Since $Z_{-j}$ is
independent of the later projectors in its summand,
\[
\mathbb{E}\bigl\|P_0P_{-1}\cdots P_{-j+1}Z_{-j}\bigr\|_2^2
=\operatorname{tr}\!\left(\mathcal{S}_\Pi^j(M_Z)\right)
\leq \sqrt d\,\rho_\Pi^j\|M_Z\|_F.
\]
The series therefore converges in $L^2$. Shifting all time indices by one
shows that its law is invariant under the recursion.

For uniqueness, synchronously couple two copies initialized from any two
finite-second-moment invariant laws and use the same future task sequence.
Their difference satisfies
\[
D_t=P_tP_{t-1}\cdots P_1D_0,
\]
and hence
\[
\mathbb{E}\|D_t\|_2^2
=\operatorname{tr}\!\left(
\mathcal{S}_\Pi^t\bigl(\mathbb{E}[D_0D_0^\top]\bigr)
\right)
\longrightarrow0.
\]
The two invariant laws must therefore coincide.
\end{proof}

\section{Full Proofs}
\label{app:full-proofs}

\subsection{Proof of Theorem~\ref{thm:forgetting-loss}}
\label{app:proof-forgetting-loss}

\begin{proof}
Write $\rho_\Pi:=\|\mathcal{S}_\Pi\|_{\mathrm{op},F}<1$. We first show that
the population loss converges to its stationary value. Let
$W_0^{(\nu)}\sim\nu$ be independent of $W_0$ and of the task stream, and run
$W_t^{(\nu)}$ using the same tasks as $W_t$. Then
$W_t^{(\nu)}\sim\nu$ for every $t$, while
\[
D_t:=W_t-W_t^{(\nu)}=P_tP_{t-1}\cdots P_1D_0.
\]
Since $D_0$ is independent of the task stream,
\[
\mathbb{E}[D_tD_t^\top]
=\mathcal{S}_\Pi^t\!\left(\mathbb{E}[D_0D_0^\top]\right),
\]
and therefore
\[
\mathbb{E}\|D_t\|_2^2
\leq
\sqrt d\,\rho_\Pi^t
\left\|\mathbb{E}[D_0D_0^\top]\right\|_F
\longrightarrow0.
\]
In particular, $\sup_t\mathbb{E}\|W_t\|_2^2<\infty$. Using the population-loss
identity from Section~\ref{sec:setup},
\begin{align*}
|G_t-G_\infty|
&=\left|
\mathbb{E}\|W_t-w_\star\|_{\overline C}^2
-\mathbb{E}\|W_t^{(\nu)}-w_\star\|_{\overline C}^2
\right| \\
&\leq
\|\overline C\|_{\mathrm{op}}
\bigl(\mathbb{E}\|D_t\|_2^2\bigr)^{1/2}
\left(
\mathbb{E}\|W_t+W_t^{(\nu)}-2w_\star\|_2^2
\right)^{1/2},
\end{align*}
which tends to zero.

We next compare forgetting with population loss. Fix $s<T$ and draw
$\tau_s'\sim\Pi$ independently of the original task stream, with associated
$(P_s',Z_s')$. Define a modified trajectory by keeping the original tasks at
all times except $s$ and using $\tau_s'$ at time $s$; denote its states by
$\widetilde W_t^{(s)}$. The final state $\widetilde W_T^{(s)}$ has the same
distribution as $W_T$ and is independent of the original task $\tau_s$.
Consequently,
\[
\mathbb{E}\!\left[\ell_{\tau_s}(\widetilde W_T^{(s)})\right]
=\mathbb{E}[R(\widetilde W_T^{(s)})]
=G_T.
\]

At the replacement time,
\[
W_s-\widetilde W_s^{(s)}
=(P_s-P_s')W_{s-1}+Z_s-Z_s'.
\]
Because orthogonal projectors have operator norm at most one,
$\sup_t\mathbb{E}\|W_t\|_2^2<\infty$, and
$\mathbb{E}\|Z_\tau\|_2^2<\infty$, there is a finite constant $K_0$ such
that
\[
\sup_{s\geq1}
\mathbb{E}\bigl\|W_s-\widetilde W_s^{(s)}\bigr\|_2^2
\leq K_0.
\]
The difference at time $s$ is independent of the future task projectors, and
for every $T>s$,
\[
W_T-\widetilde W_T^{(s)}
=P_TP_{T-1}\cdots P_{s+1}
\bigl(W_s-\widetilde W_s^{(s)}\bigr).
\]
It follows that, for another finite constant $K_1$,
\[
\mathbb{E}\bigl\|W_T-\widetilde W_T^{(s)}\bigr\|_2^2
\leq K_1\rho_\Pi^{T-s}.
\]

For any $w,v\in\mathbb{R}^d$,
\[
\ell_{\tau_s}(w)-\ell_{\tau_s}(v)
=(w-v)^\top C_s(w+v-2Z_s).
\]
Since $\|C_s\|_{\mathrm{op}}\leq L$ almost surely, Cauchy--Schwarz and the
uniform second-moment bounds give a finite constant $K_2$, independent of
$s$ and $T$, such that
\begin{align*}
\left|
\mathbb{E}[\ell_{\tau_s}(W_T)]-G_T
\right|
&\leq
L\bigl(\mathbb{E}\|W_T-\widetilde W_T^{(s)}\|_2^2\bigr)^{1/2} \\
&\quad\times
\left(
\mathbb{E}\|W_T+\widetilde W_T^{(s)}-2Z_s\|_2^2
\right)^{1/2} \\
&\leq K_2\rho_\Pi^{(T-s)/2}.
\end{align*}
Finally,
\begin{align*}
|F_T-G_T|
&\leq
\frac{1}{T-1}
\sum_{s=1}^{T-1}
\left|
\mathbb{E}[\ell_{\tau_s}(W_T)]-G_T
\right| \\
&\leq
\frac{K_2}{T-1}
\sum_{j=1}^{T-1}\rho_\Pi^{j/2}
=O(T^{-1})
\longrightarrow0.
\end{align*}
Together with $G_T\to G_\infty$, this proves the theorem.
\end{proof}

\subsection{Proof of Theorem~\ref{thm:sequential-price}}
\label{app:proof-sequential-price}

\begin{proof}
Let $W_\infty\sim\nu$. Lemma~\ref{lem:population-optimum} gives the pointwise
identity
\[
R(w)=R^\star+\|w-w_\star\|_{\overline C}^2.
\]
Taking expectation at $W_\infty$ and writing
$W_\infty-w_\star=(W_\infty-\mu)+(\mu-w_\star)$ yields
\begin{align*}
G_\infty-R^\star
&=\mathbb{E}\|W_\infty-w_\star\|_{\overline C}^2 \\
&=\mathbb{E}\|W_\infty-\mu\|_{\overline C}^2
  +\|\mu-w_\star\|_{\overline C}^2 \\
&=\operatorname{tr}(\overline C\Sigma)
  +\|\mu-w_\star\|_{\overline C}^2.
\end{align*}
The cross term vanishes because
$\mathbb{E}[W_\infty-\mu]=0$. This proves
$G_\infty=R^\star+\Delta_{\mathrm{seq}}$ once the stationary moments are
identified.

Draw a fresh task $\tau\sim\Pi$ independently of $W_\infty$ and define the
one-step update
\[
W_\infty':=P_\tau W_\infty+Z_\tau.
\]
Stationarity gives $W_\infty'\overset{d}{=}W_\infty$. Taking expectations,
\[
\mu=\mathbb{E}[P_\tau]\mu+\mathbb{E}[Z_\tau].
\]
Lemma~\ref{lem:mean-contraction} shows that
$I-\mathbb{E}[P_\tau]$ is invertible, and therefore
\[
\mu
=\bigl(I-\mathbb{E}[P_\tau]\bigr)^{-1}\mathbb{E}[Z_\tau].
\]

Set $U:=W_\infty-\mu$ and
\[
\xi_\tau:=Z_\tau-(I-P_\tau)\mu.
\]
The stationary mean equation implies
\[
\mathbb{E}[\xi_\tau]
=\mathbb{E}[Z_\tau]-(I-\mathbb{E}[P_\tau])\mu
=0.
\]
After centering the one-step recursion,
\[
W_\infty'-\mu=P_\tau U+\xi_\tau.
\]
The random vector $U$ is independent of the fresh task and has mean zero.
Consequently, both cross-covariance terms vanish:
\[
\mathbb{E}[P_\tau U\xi_\tau^\top]
=\mathbb{E}_\tau\!\left[
P_\tau\,\mathbb{E}[U]\,\xi_\tau^\top
\right]
=0,
\]
and likewise for its transpose. Taking covariances gives
\[
\Sigma
=\mathcal{S}_\Pi(\Sigma)
 +\mathbb{E}[\xi_\tau\xi_\tau^\top].
\]
Lemma~\ref{lem:coverage-contraction} makes
$I-\mathcal{S}_\Pi$ invertible, so
\[
\Sigma
=\bigl(I-\mathcal{S}_\Pi\bigr)^{-1}
  \mathbb{E}[\xi_\tau\xi_\tau^\top].
\]
The interpretation of $R^\star$ as the optimum attainable by a shared
parameter and as the limit of joint training follows from
Lemmas~\ref{lem:population-optimum} and~\ref{lem:joint-consistency}.
Finally, Theorem~\ref{thm:forgetting-loss} gives
$F_T\to G_\infty=R^\star+\Delta_{\mathrm{seq}}$.
\end{proof}

\subsection{Proof of Corollary~\ref{cor:twofold-loss}}
\label{app:proof-twofold-loss}

\begin{proof}
Write $Q_\tau=I-P_\tau$, $\overline Q=\mathbb E[Q_\tau]$, and
\[
r_\tau:=Q_\tau(Z_\tau-w_\star).
\]
Because $P_\tau Q_\tau=0$, the centered exact-fitting update is
\[
W_t-w_\star=P_{\tau_t}(W_{t-1}-w_\star)+r_{\tau_t},
\qquad
P_{\tau_t}r_{\tau_t}=0.
\]
Let $E=W_\infty-w_\star$ and take a fresh task $\tau$ independent of $E$.
Stationarity implies that $P_\tau E+r_\tau$ has the same distribution as
$E$. Orthogonality therefore gives
\begin{align}
\mathbb E\|E\|^2
&=\mathbb E\|P_\tau E\|^2+\mathbb E\|r_\tau\|^2,\nonumber\\
\mathbb E\|E\|_{\overline Q}^2
&=\mathbb E\|r_\tau\|^2.
\label{eq:stationary-projection-balance}
\end{align}
Here the second line uses independence and
$\|E\|^2-\mathbb E_\tau\|P_\tau E\|^2
=\|E\|_{\overline Q}^2$.

Since $C_\tau=C_\tau Q_\tau=Q_\tau C_\tau$, the intrinsic loss is
\[
R^\star=\mathbb E\|r_\tau\|_{C_\tau}^2.
\]
The assumed curvature bounds hence yield
\begin{equation}
m\,\mathbb E\|r_\tau\|^2
\leq R^\star
\leq L\,\mathbb E\|r_\tau\|^2.
\label{eq:intrinsic-active-bounds}
\end{equation}
Averaging the same bounds gives
$m\overline Q\preceq\overline C\preceq L\overline Q$. Together with
Theorem~\ref{thm:sequential-price} and
Eq.~\eqref{eq:stationary-projection-balance}, this gives
\begin{equation}
m\,\mathbb E\|r_\tau\|^2
\leq\Delta_{\mathrm{seq}}
\leq L\,\mathbb E\|r_\tau\|^2.
\label{eq:sequential-active-bounds}
\end{equation}
Comparing Eqs.~\eqref{eq:intrinsic-active-bounds}
and~\eqref{eq:sequential-active-bounds} proves the sequential-price bounds.
Adding $R^\star$ gives the bounds on $G_\infty$, and
Theorem~\ref{thm:forgetting-loss} gives $F_T\to G_\infty$.

If $C_\tau=cQ_\tau$ almost surely, then both
Eqs.~\eqref{eq:intrinsic-active-bounds}
and~\eqref{eq:sequential-active-bounds} are equalities with value
$c\,\mathbb E\|r_\tau\|^2$. Hence
$\Delta_{\mathrm{seq}}=R^\star$, $G_\infty=2R^\star$, and
$F_T\to2R^\star$. If instead $C_\tau\equiv C\succ0$, then
$P_\tau=0$ and $W_t=Z_{\tau_t}$, which gives the same twofold conclusion
directly.
\end{proof}

\paragraph{Tightness and departures from the twofold law.}
\begin{proposition}[Sharpness of the Curvature Bounds]
\label{prop:curvature-sharpness}
For each $\kappa\geq1$, there is a family of task distributions satisfying the
assumptions of Corollary~\ref{cor:twofold-loss}, indexed by $p\in(0,1)$, for
which $R^\star>0$ and
\[
\frac{\Delta_{\mathrm{seq}}}{R^\star}
=\frac{1-p}{\kappa}+p\kappa.
\]
Consequently, the constants $1/\kappa$ and $\kappa$ in that corollary are
optimal uniformly over the stated class of task distributions.
\end{proposition}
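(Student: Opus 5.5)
The plan is to realize every target ratio with a one-dimensional family, where exact fitting has no memory and $\Delta_{\mathrm{seq}}$ becomes an explicit scalar. For $\kappa=1$ the formula gives the value $1$, which is the twofold law of Corollary~\ref{cor:twofold-loss}, so I will assume $\kappa>1$.

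\textbf{Setup.} Take $d=1$, normalize $m=1$ and $L=\kappa$, and give every task positive scalar curvature $c_\tau\in\{1,\kappa\}$. The tasks are $X_\tau=\sqrt{c_\tau}$ with $n_\tau=1$ and target $Z_\tau$; each is individually realizable. Then $Q_\tau=1$ and $mQ_\tau\preceq C_\tau\preceq LQ_\tau$ holds. Since $P_\tau=0$, exact fitting gives $W_t=Z_{\tau_t}$, so $W_\infty\overset{d}{=}Z_\tau$.

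\textbf{Scalar formulas.} By Theorem~\ref{thm:sequential-price} and the identity $R(w)=R^\star+\|w-w_\star\|_{\overline C}^2$ of Lemma~\ref{lem:population-optimum},
\[
\Delta_{\mathrm{seq}}=\overline C\,\mathbb E(Z_\tau-w_\star)^2,
\qquad
R^\star=\mathbb E\bigl[c_\tau(Z_\tau-w_\star)^2\bigr].
\]

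\textbf{Two-type construction.} With probability $q$ the task has curvature $\kappa$ and target $\pm a$, each sign equally likely. With probability $1-q$ it has curvature $1$ and target $\pm b$, again symmetric. Symmetry gives $\overline h=0$, hence $w_\star=0$. Let $\alpha:=q\kappa a^2/R^\star\in(0,1)$ be the share of intrinsic loss carried by the stiff tasks, so that $(1-q)b^2=(1-\alpha)R^\star$. Then
\[
\frac{\Delta_{\mathrm{seq}}}{R^\star}
=f(q,\alpha):=\bigl(q\kappa+1-q\bigr)\Bigl(\frac{\alpha}{\kappa}+1-\alpha\Bigr).
\]
For any $(q,\alpha)\in(0,1)^2$ and any $R^\star>0$ we can solve for $a,b>0$. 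Every assumption of Section~\ref{sec:setup} holds: the support is finite and $\overline C\ge1$.

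\textbf{Hitting the target.} $f$ is continuous on the connected set $(0,1)^2$. As $(q,\alpha)\to(0,1)$, $f\to1/\kappa$; as $(q,\alpha)\to(1,0)$, $f\to\kappa$. So $f$ takes every value in $(1/\kappa,\kappa)$. The target $g(p)=(1-p)/\kappa+p\kappa$ maps $(0,1)$ onto exactly $(1/\kappa,\kappa)$. For each $p$ I pick $(q(p),\alpha(p))$ with $f=g(p)$, e.g.\ along the segment joining $(0,1)$ and $(1,0)$, parametrized continuously. This gives the required $p$-indexed family with $R^\star>0$. Letting $p\to0$ and $p\to1$ brings the ratio arbitrarily close to $1/\kappa$ and to $\kappa$, so neither constant in Corollary~\ref{cor:twofold-loss} can be improved.

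\textbf{Main obstacle.} The reduction itself is easy. The delicate part is producing the exact formula in $p$ rather than only an interval of attainable values. The fallback is the implicit choice along the segment above.

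I would first try for an explicit parametrization, such as $\alpha=1-q$ combined with a separate scaling between the two curvature levels. The quantity to exploit is the following: putting most of $R^\star$ on the soft tasks while they are rare in $\overline C$ drives the ratio down toward $1/\kappa$. Putting it on stiff tasks that dominate $\overline C$ while contributing little to $\mathbb E Z^2$ drives it up toward $\kappa$.
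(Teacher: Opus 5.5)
Your proof is correct, but it reaches the formula by a different route than the paper.

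The paper uses just two one-dimensional tasks, $\ell_0(w)=w^2$ and $\ell_1(w)=\kappa(w-1)^2$, drawn with probabilities $1-p$ and $p$. Then $W_\infty\in\{0,1\}$, $w_\star=\kappa p/(1-p+\kappa p)$, $R^\star=\kappa p(1-p)/(1-p+\kappa p)$ and $G_\infty=(1+\kappa)p(1-p)$. The ratio $(1-p)/\kappa+p\kappa$ drops out directly, with $p$ the sampling probability of the stiff task. There the price also contains a nonzero bias term, since $\mu=p\neq w_\star$.

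Your symmetric four-atom construction instead forces $\mu=w_\star=0$, so the price is pure variance. It also separates the two mechanisms into the product $f(q,\alpha)=\overline C\cdot\mathbb E[Z_\tau^2]/R^\star$, whose factors range over $(1,\kappa)$ and $(1/\kappa,1)$. This shows clearly why both extremes are approached, and that a whole two-parameter family fills the open interval. The paper's example buys the exact formula in $p$ with no inversion step.

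Your matching step is in fact explicit rather than implicit. On your segment $\alpha=1-q$ you get $f(q,1-q)=\bigl(1+q(\kappa-1)\bigr)^2/\kappa$, which is strictly increasing in $q$. So $q(p)=\bigl(\sqrt{1+p(\kappa^2-1)}-1\bigr)/(\kappa-1)\in(0,1)$ gives the required family in closed form.

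Setting aside $\kappa=1$ is harmless. Your construction with $\kappa=1$ still gives ratio $1$ with $R^\star>0$.
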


\begin{proof}
Consider the one-dimensional tasks
\[
\ell_0(w)=w^2,\qquad \ell_1(w)=\kappa(w-1)^2,
\]
sampled with probabilities $1-p$ and $p$, respectively. Here $Q_\tau=1$,
and the curvature assumption holds with $m=1$ and $L=\kappa$. Each exact-fit
update sets the parameter to the current task optimum, so the stationary
parameter $W_\infty$ equals $0$ or $1$ with probabilities $1-p$ and $p$.
Direct minimization of $R(w)=(1-p)w^2+p\kappa(w-1)^2$ gives
\[
w_\star=\frac{\kappa p}{1-p+\kappa p},\qquad
R^\star=\frac{\kappa p(1-p)}{1-p+\kappa p}.
\]
Evaluating the stationary parameter on an independent task gives
\[
G_\infty=(1-p)\mathbb E[W_\infty^2]
+p\kappa\mathbb E[(W_\infty-1)^2]
=(1+\kappa)p(1-p).
\]
Subtracting $R^\star$ and dividing by it yields the stated ratio. For fixed
$\kappa>1$, the ratio approaches $1/\kappa$ as $p\to0$ and $\kappa$ as
$p\to1$, while $R^\star>0$ for every $p\in(0,1)$. For $\kappa=1$, the
ratio equals $1$ for every $p$.
\end{proof}

For $\kappa=4$ and $p=1/2$, this is the two-task example in
Section~\ref{sec:sequential-price}, and $G_\infty/R^\star=3.125$.
The mechanism is visible in the weights: exact fitting visits the task optima
according to their sampling probabilities, whereas the joint-training optimum
also accounts for their curvatures. Changing the frequency of the
higher-curvature task can therefore make the sequential price smaller or
larger than the intrinsic loss, even with the active subspace held fixed.

\subsection{Proofs for Fixed-Strength EWC}
\label{app:proof-ewc-price-rate}

\begin{proof}[Proof of Proposition~\ref{prop:ewc-exact}]
The first-order optimality condition is
\[
C_{\tau_t}(W_t-Z_{\tau_t})
+\lambda\overline C(W_t-W_{t-1})=0.
\]
Because $C_{\tau_t}+\lambda\overline C\succ0$, this equation has the unique
solution
\[
W_t=A_{\tau_t,\lambda}W_{t-1}
    +B_{\tau_t,\lambda}Z_{\tau_t},
\]
where
\[
A_{\tau,\lambda}
:=\lambda(C_\tau+\lambda\overline C)^{-1}\overline C,
\qquad
B_{\tau,\lambda}
:=(C_\tau+\lambda\overline C)^{-1}C_\tau.
\]
Their sum is the identity.

We first establish contraction. Introduce the transformed coordinates
\[
\widehat W:=\overline C^{1/2}W,
\qquad
\widehat Z_\tau:=\overline C^{1/2}Z_\tau,
\qquad
\widehat C_\tau:=\overline C^{-1/2}C_\tau\overline C^{-1/2}.
\]
Then $\mathbb E[\widehat C_\tau]=I$, and the transformed old-state matrix is
\[
\widehat A_{\tau,\lambda}
:=\overline C^{1/2}A_{\tau,\lambda}\overline C^{-1/2}
=(I+\widehat C_\tau/\lambda)^{-1}.
\]
It is symmetric and satisfies $0\preceq\widehat A_{\tau,\lambda}\preceq I$.
Define
\[
\rho_\lambda
:=\lambda_{\max}\!\left(
\mathbb E[\widehat A_{\tau,\lambda}^2]
\right)
=\lambda_{\max}\!\left(
\mathbb E[(I+\widehat C_\tau/\lambda)^{-2}]
\right).
\]
For two chains driven by the same tasks, their transformed difference obeys
\[
\widehat D_t=\widehat A_{\tau_t,\lambda}\widehat D_{t-1}.
\]
Independence of $\tau_t$ and $\widehat D_{t-1}$ gives
\[
\mathbb E\|\widehat D_t\|_2^2
\leq\rho_\lambda\mathbb E\|\widehat D_{t-1}\|_2^2.
\]
Moreover, $\rho_\lambda<1$. Otherwise, some unit vector $v$ would satisfy
$\mathbb E\|\widehat A_{\tau,\lambda}v\|_2^2=1$. Since every realization is
nonexpansive, this would imply $\widehat C_\tau v=0$ almost surely, contradicting
$v^\top\mathbb E[\widehat C_\tau]v=1$. Iteration proves the stated coupling
bound. Together with the finite second moment of the affine forcing, this
mean-square contraction gives existence and uniqueness within the class of
invariant distributions with finite second moment.

Let $W_\infty$ have this invariant distribution, and define
\[
\mu_\lambda:=\mathbb E[W_\infty],
\qquad
\Sigma_\lambda:=\operatorname{Cov}(W_\infty).
\]
Taking expectations in the stationary recursion gives
\[
\bigl(I-\mathbb E[A_{\tau,\lambda}]\bigr)\mu_\lambda
=\mathbb E[B_{\tau,\lambda}Z_\tau].
\]
The inverse exists by the preceding contraction, so
\[
\mu_\lambda
=\bigl(I-\mathbb E[A_{\tau,\lambda}]\bigr)^{-1}
\mathbb E[B_{\tau,\lambda}Z_\tau].
\]
Define
\[
\xi_{\tau,\lambda}:=B_{\tau,\lambda}(Z_\tau-\mu_\lambda),
\qquad
\mathcal S_\lambda(X)
:=\mathbb E[A_{\tau,\lambda}XA_{\tau,\lambda}^{\top}].
\]
Let $\widetilde W_\infty$ be an independent stationary state, independent of a
fresh task $\tau$, and set
\[
W_\infty^+
:=A_{\tau,\lambda}\widetilde W_\infty
+B_{\tau,\lambda}Z_\tau.
\]
Then $W_\infty^+\overset{d}{=}\widetilde W_\infty$, and after centering,
\[
W_\infty^+-\mu_\lambda
=A_{\tau,\lambda}(\widetilde W_\infty-\mu_\lambda)
+\xi_{\tau,\lambda},
\]
where $\mathbb E[\xi_{\tau,\lambda}]=0$. The cross terms therefore vanish,
yielding
\[
\Sigma_\lambda
=\mathcal S_\lambda(\Sigma_\lambda)
+\mathbb E[\xi_{\tau,\lambda}\xi_{\tau,\lambda}^{\top}].
\]
The same contraction makes $I-\mathcal S_\lambda$ invertible, proving the
second-moment formula
\[
\Sigma_\lambda
=\bigl(I-\mathcal S_\lambda\bigr)^{-1}
\mathbb E[\xi_{\tau,\lambda}\xi_{\tau,\lambda}^{\top}].
\]
Finally,
\[
\mathbb E[R(W_\infty)]
=R^\star+\mathbb E\|W_\infty-w_\star\|_{\overline C}^2
=R^\star+\|\mu_\lambda-w_\star\|_{\overline C}^2
+\operatorname{tr}(\overline C\Sigma_\lambda),
\]
so the last two terms give the exact stationary sequential price.

For a finitely supported task distribution, every expectation above is a finite
sum. The mean equation is a finite linear system, and vectorizing the covariance
equation produces another finite linear system with a unique solution. If the
task curvatures are diagonal, then $A_{\tau,\lambda}$ and
$B_{\tau,\lambda}$ are diagonal as well, so the moment equations separate
coordinate by coordinate. This proves the proposition.
\end{proof}

\begin{proof}[Proof of Theorem~\ref{thm:ewc-price-rate}]
Retain the notation established in the preceding proof. Set
$\varepsilon:=\lambda^{-1}$ and use transformed coordinates throughout. Uniformly over tasks,
\[
\widehat A_{\tau,\lambda}
=I-\varepsilon\widehat C_\tau
+\varepsilon^2\widehat C_\tau^2+O(\varepsilon^3),
\qquad
\widehat B_{\tau,\lambda}
=\varepsilon\widehat C_\tau
-\varepsilon^2\widehat C_\tau^2+O(\varepsilon^3).
\]
The remainders are uniform because $\|C_\tau\|_{\mathrm{op}}$ is bounded and
$\overline C\succ0$. Write
$\widehat w_\star=\overline C^{1/2}w_\star$ and
$\widehat d_\tau=\widehat Z_\tau-\widehat w_\star$. Population optimality gives
\[
\mathbb E[\widehat C_\tau\widehat d_\tau]=0.
\]
The stationary mean equation can thus be written as
\[
\mathbb E[\widehat B_{\tau,\lambda}]
(\widehat\mu_\lambda-\widehat w_\star)
=\mathbb E[\widehat B_{\tau,\lambda}\widehat d_\tau].
\]
Its left matrix is $\varepsilon I+O(\varepsilon^2)$, while the right side is
$O(\varepsilon^2)$. Hence
$\widehat\mu_\lambda-\widehat w_\star=O(\varepsilon)$.

The transformed centered forcing satisfies
\[
\widehat\xi_{\tau,\lambda}
=\varepsilon\widehat C_\tau\widehat d_\tau+O_{L^2}(\varepsilon^2).
\]
Define
\[
Q:=\mathbb E\!\left[
\widehat C_\tau\widehat d_\tau\widehat d_\tau^\top
\widehat C_\tau
\right].
\]
The covariance equation and the expansion of $\widehat A_{\tau,\lambda}$ give
\[
\widehat\Sigma_\lambda
=\mathbb E[\widehat A_{\tau,\lambda}
\widehat\Sigma_\lambda\widehat A_{\tau,\lambda}]
+\varepsilon^2Q+O(\varepsilon^3).
\]
More explicitly, the associated operator on symmetric matrices satisfies
\[
\mathbb E[\widehat A_{\tau,\lambda}X
\widehat A_{\tau,\lambda}]
=X-2\varepsilon X+\mathcal R_\varepsilon(X),
\qquad
\|\mathcal R_\varepsilon(X)\|_{\mathrm F}
\leq K\varepsilon^2\|X\|_{\mathrm F},
\]
because $\mathbb E[\widehat C_\tau]=I$. Hence its resolvent is
$(2\varepsilon)^{-1}I+O(1)$. Applying this resolvent to the centered forcing
covariance $\varepsilon^2Q+O(\varepsilon^3)$ yields
\[
\widehat\Sigma_\lambda=\frac{\varepsilon}{2}Q+O(\varepsilon^2).
\]
Therefore the squared mean bias is $O(\varepsilon^2)$ and
\begin{align*}
\operatorname{tr}(\overline C\Sigma_\lambda)
&=\operatorname{tr}(\widehat\Sigma_\lambda)\\
&=\frac{\varepsilon}{2}
\mathbb E\|\widehat C_\tau\widehat d_\tau\|_2^2
+O(\varepsilon^2)\\
&=\frac{\kappa_\Pi}{\lambda}+O(\lambda^{-2}).
\end{align*}
This proves the expansion of $G_\lambda-R^\star$. If
$\kappa_\Pi=0$, then $C_\tau(Z_\tau-w_\star)=0$ almost surely, so
$\ell_\tau(w_\star)=0$ almost surely and $R^\star=0$. Thus $R^\star>0$ implies
$\kappa_\Pi>0$.

Finally,
\[
\mathbb E[\widehat A_{\tau,\lambda}^2]
=I-2\varepsilon I+O(\varepsilon^2),
\]
and hence $\rho_\lambda=1-2/\lambda+O(\lambda^{-2})$. Expanding
$\log\rho_\lambda$ and defining
\[
T_{\mathrm{mix}}(\eta,\lambda)
:=\left\lceil\frac{\log\eta}{\log\rho_\lambda}\right\rceil
\]
shows that this many tasks are sufficient to reduce the coupled mean-square
distance by a factor $\eta$. Applying the ceiling changes the expansion by at
most a constant, which gives
\[
T_{\mathrm{mix}}(\eta,\lambda)
=\frac{\lambda}{2}\log\frac1\eta+O(1).
\]
\end{proof}

\section{Experimental Details for Jester}
\label{app:jester-details}

\subsection{Data and task distribution}

We use Dataset~1 from the Jester release
\citep{goldberg2001eigentaste}. Its three files contain $73{,}421$ users and
$100$ joke-rating columns; the first column records the number of ratings made
by each user. The value $99$ denotes a missing rating. We remove the first
column and retain all $N=73{,}421$ users and all $4{,}136{,}360$ observed
ratings. Dividing observed scores by ten maps the rating scale from $[-10,10]$
to $[-1,1]$. A user rates between $15$ and $100$ jokes, with mean $56.34$ and
median $52$.

User $i$ and the set $\mathcal J_i$ of jokes rated by that user define one task.
The experimental task distribution is uniform over the $N$ users. Every task
arrival is an independent draw with replacement from this distribution, so a
user may appear more than once in a task stream. The observation mask and the
ratings are kept exactly as recorded in the data.

\subsection{Model and exact theoretical quantities}

For a one-hot joke input, the single-layer model returns the corresponding
coordinate of $w\in\mathbb R^{100}$. Write
$m_{ij}=\mathbf 1\{j\in\mathcal J_i\}$, $n_i=\sum_jm_{ij}$, and set $Z_{ij}$
to the rescaled rating when $m_{ij}=1$ and to zero otherwise. Then
\[
\ell_i(w)=\sum_{j=1}^{100}c_{ij}(w_j-Z_{ij})^2,
\qquad
c_{ij}:=\frac{m_{ij}}{n_i},
\qquad
C_i=\operatorname{diag}(c_{i1},\ldots,c_{i,100}).
\]
Exact fitting overwrites the rated coordinates and retains the unrated ones,
so $P_i=I-\operatorname{diag}(m_{i1},\ldots,m_{i,100})$. The task ranks
$n_i$ vary from $15$ to $100$. Every coordinate has positive empirical
coverage, and hence $\overline C=N^{-1}\sum_iC_i\succ0$.

Define $\overline c_j=N^{-1}\sum_i c_{ij}$. The population minimizer and
intrinsic loss are
\[
w_j^\star=\frac{\sum_i c_{ij}Z_{ij}}{N\overline c_j},
\qquad
R^\star=\frac1N\sum_{i=1}^N\sum_{j=1}^{100}
c_{ij}(Z_{ij}-w_j^\star)^2.
\]
For sequential exact fitting, let
\[
p_j:=\frac1N\sum_{i=1}^N m_{ij},
\qquad
\mu_j:=\frac{\sum_i m_{ij}Z_{ij}}{Np_j},
\qquad
\sigma_j^2:=\frac{\sum_i m_{ij}Z_{ij}^2}{Np_j}-\mu_j^2.
\]
The stationary state at coordinate $j$ is the rating supplied by the most
recent task that observed that coordinate. It therefore has mean $\mu_j$ and
variance $\sigma_j^2$. Specializing Theorem~\ref{thm:sequential-price} gives
\[
G_\infty=R^\star+
\sum_{j=1}^{100}\overline c_j
\bigl((\mu_j-w_j^\star)^2+\sigma_j^2\bigr).
\]
The three terms are respectively
\[
R^\star=0.2569488,
\qquad
\sum_j\overline c_j(\mu_j-w_j^\star)^2=0.0002675,
\qquad
\sum_j\overline c_j\sigma_j^2=0.2524891,
\]
and hence the sequential price is $0.2527566$ and
$G_\infty=0.5097054$.

With $W_0=0$, coordinate $j$ has not yet been observed after $T$ tasks with
probability $(1-p_j)^T$. Consequently,
\[
\mathbb E[W_{Tj}]=(1-(1-p_j)^T)\mu_j,
\qquad
\mathbb E[W_{Tj}^2]
=(1-(1-p_j)^T)(\sigma_j^2+\mu_j^2),
\]
which gives the exact population-loss curve in Figure~\ref{fig:jester}(a).
If $g_{\infty,j}$ denotes coordinate $j$'s contribution to $G_\infty$, the
corresponding forgetting curve is
\[
F_T=\sum_{j=1}^{100}g_{\infty,j}
\left(1-
\frac{(1-p_j)(1-(1-p_j)^{T-1})}{(T-1)p_j}
\right),
\qquad T\geq2.
\]

\subsection{Sampled task streams}

Panel~(a) averages $2{,}000$ independently sampled task streams of length
$1{,}000$. At each displayed task index, population loss is evaluated against
the complete empirical distribution, while forgetting averages the loss on all
earlier tasks in that stream. At task $1{,}000$, the sampled population loss is
$0.50899\pm0.00258$ and the sampled forgetting is
$0.50854\pm0.00258$, where the reported uncertainties are standard errors.

Appendix Figure~\ref{fig:jester-diagnostics}(a) uses $20{,}000$ independent
streams of length $80$ to estimate the stationary population loss. The
smallest joke-coverage probability is $0.2520$, making the probability that any
fixed coordinate remains unobserved after $80$ tasks less than $10^{-10}$. The
sampled stationary population loss is $0.50947\pm0.00081$. Panel~(b) uses the $100$
coordinate-wise population and stationary means computed directly from the
empirical distribution.

\begin{figure}[t]
  \centering
  \includegraphics[width=0.82\textwidth]{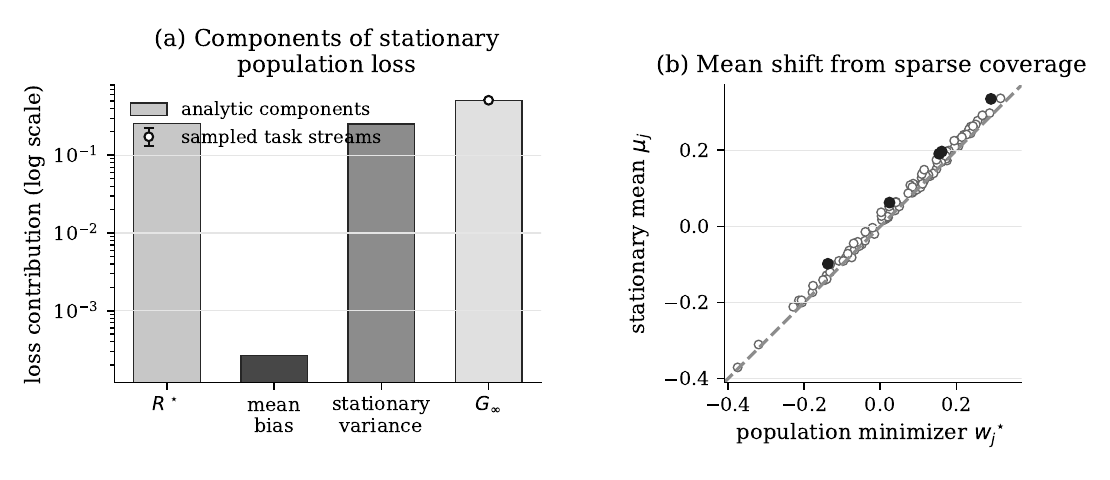}
  \caption{Additional diagnostics for sequential exact fitting on Jester.
  Panel~(a) displays the intrinsic-loss, mean-bias, and stationary-variance
  components of $G_\infty$, together with an independent task-stream estimate
  of the total. Panel~(b) compares the stationary mean $\mu_j$ with the
  population minimizer $w_j^\star$ for every joke; filled
  markers identify the five largest coordinate differences.}
  \label{fig:jester-diagnostics}
\end{figure}

\subsection{EWC calculations}

For Figure~\ref{fig:jester}(b)--(c), we use the objective with the fixed
population-curvature penalty
\[
\ell_i(w)+\lambda\|w-W_{t-1}\|_{\overline C}^2.
\]
This is the EWC objective in Theorem~\ref{thm:ewc-price-rate}. Its coordinate
update has the affine form
\[
W_{tj}=a_{ij}W_{t-1,j}+b_{ij},
\qquad
a_{ij}=\frac{\lambda\overline c_j}{c_{ij}+\lambda\overline c_j},
\qquad
b_{ij}=(1-a_{ij})Z_{ij},
\]
with $a_{ij}=1$ and $b_{ij}=0$ on unrated coordinates. At $\lambda=0$, this
reduces to the exact-fit projection update above.

Let $\alpha_j=\mathbb E[a_{ij}]$, $\beta_j=\mathbb E[b_{ij}]$,
$u_j=\mathbb E[a_{ij}^2]$, $v_j=\mathbb E[a_{ij}b_{ij}]$, and
$s_j=\mathbb E[b_{ij}^2]$, where expectations are uniform over users. The
stationary coordinate moments are
\[
m_j=\frac{\beta_j}{1-\alpha_j},
\qquad
h_j=\frac{2v_jm_j+s_j}{1-u_j}.
\]
Thus the heterogeneous-curvature curve in panel~(b) uses the exact
finite-distribution price
\[
\Delta_\lambda
=\sum_{j=1}^{100}\overline c_j
\bigl(h_j-2w_j^\star m_j+(w_j^\star)^2\bigr).
\]
Panel~(c) reports the number of tasks required for the worst coordinate's
mean-square dependence on initialization to fall below $1\%$,
\[
T_{\mathrm{mix}}(0.01,\lambda)
=\left\lceil\frac{\log(0.01)}{\log\rho_\lambda}\right\rceil,
\qquad
\rho_\lambda:=\max_j u_j.
\]
For Jester, the distributional constant in Theorem~\ref{thm:ewc-price-rate} is
\[
\kappa_\Pi
=\frac{1}{2N}\sum_{i=1}^N\sum_{j=1}^{100}
\frac{c_{ij}^2(Z_{ij}-w_j^\star)^2}{\overline c_j}
=0.2850331.
\]
The dashed references show the large-$\lambda$ expressions
$R^\star+\kappa_\Pi/\lambda$ in panel~(b) and
$(\lambda/2)\log(100)$ in panel~(c).

We evaluate
$\lambda\in\{0,0.25,0.5,1,2,4,8,16,32,64\}$. The diamonds in panel~(b)
independently average $1{,}000$ task streams of length $1{,}000$ at
$\lambda\in\{0,1,4,16,64\}$. Across these five strengths, every sampled
stationary population-loss estimate lies within $1.65$ standard errors of its exact
heterogeneous-curvature value.

\FloatBarrier
\section{Beyond the Exact Model: Rotated MNIST}
\label{app:rotated-mnist}

We examine how forgetting, population loss, and the sequential price behave in a nonlinear
finite-step training problem. This experiment is an out-of-model stress test:
the model is a neural network, each task is optimized for a fixed number of
steps, and consolidation acts on network outputs. The experiment therefore
tests whether forgetting, population loss, the sequential price, and the
price--rate trade-off remain informative beyond the setting in which our
theorems are exact.

\subsection{Task distribution and model}

We form four task types by rotating each MNIST image. Their angles are
$0^\circ$, $45^\circ$, $90^\circ$, and $135^\circ$. At every task arrival, the
angle is sampled independently and uniformly. The task then draws a fresh
subset of $5{,}000$ examples from a pool of $50{,}000$ training images. This
construction produces an i.i.d. task stream while allowing the input
distribution, local network geometry, and learned representation to vary
across tasks.

The predictor is a multilayer perceptron with two width-$128$ ReLU hidden
layers and ten outputs. Each task uses mean-squared loss against one-hot class
targets and is trained for three epochs with AdamW, learning rate $10^{-3}$,
batch size $512$, and weight decay $0.1$. Population quantities are evaluated
on all $10{,}000$ test images under each of the four rotations. The empirical
joint-training baseline uses the same architecture and optimizer and is
trained for $200$ passes on the uniform mixture of the four task types.

To align consolidation with the fixed population-curvature penalty in
Section~\ref{sec:ewc}, we apply the function-space objective
\[
\widehat\ell_{\tau_t}(\theta)
+\lambda\,
\mathbb E_{x\sim\overline{\mathcal D}}
\bigl\|f_\theta(x)-f_{\theta_{t-1}}(x)\bigr\|_2^2,
\]
where $\overline{\mathcal D}$ is the uniform mixture of the four rotated input
distributions. The expectation is estimated with fresh mixture minibatches,
and the preceding network is held fixed when evaluating its outputs. For a
linear predictor $f_W(x)=Wx$ with
$\overline C=\mathbb E_{x\sim\overline{\mathcal D}}[xx^\top]$, this penalty is
exactly
\[
\lambda\operatorname{tr}
\bigl((W-W_{t-1})\overline C(W-W_{t-1})^\top\bigr),
\]
the multi-output form of the population-curvature penalty analyzed in
Section~\ref{sec:ewc}. We refer to its neural counterpart as functional EWC.
Evaluating this penalty uses unlabeled inputs from all four rotations, including
those other than the current task. This access mirrors the population curvature
used in the theory and aligns the nonlinear experiment with the analyzed
regularizer.

We evaluate
$\lambda\in\{0,0.25,0.5,1,2,4,8,16,32,64\}$ over $3{,}000$ tasks and five
random seeds. Within each seed, all strengths use the same initialization and
task-type sequence. We record every one of the first $20$ tasks and then every
$20$th task.

\begin{figure}[t]
  \centering
  \includegraphics[width=\textwidth]{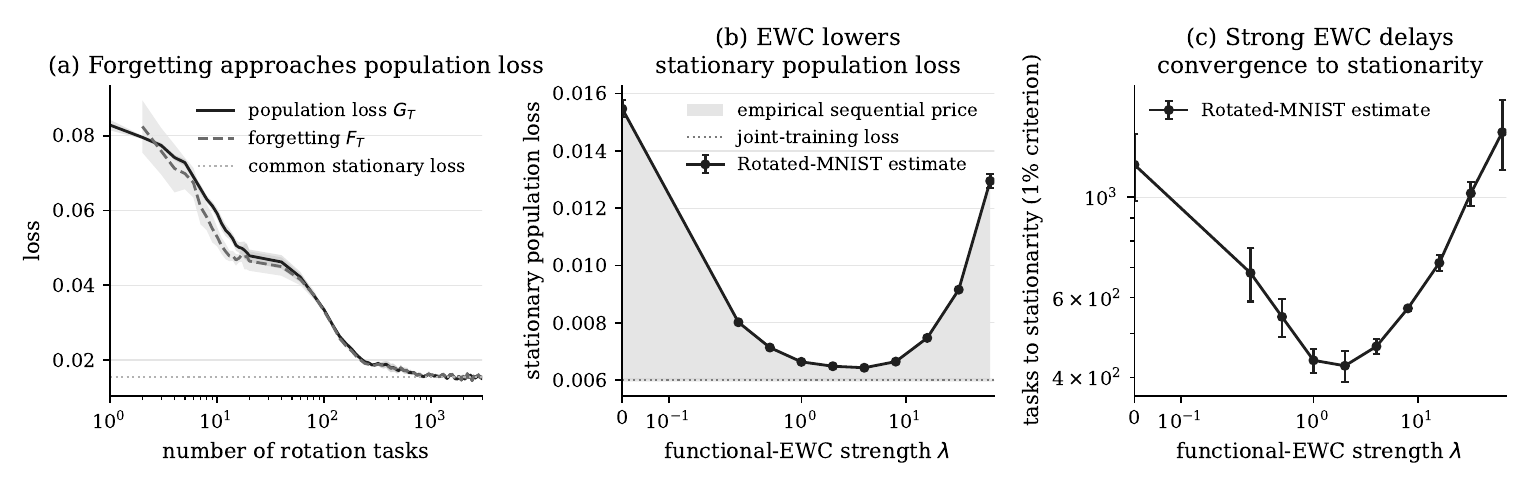}
  \caption{A nonlinear Rotated-MNIST stress test. Panel~(a) compares
  forgetting and population loss under sequential training ($\lambda=0$).
  Curves are five-evaluation moving averages and shaded bands denote standard
  errors across five seeds. Panel~(b) reports stationary population loss as a
  function of functional-EWC strength. The dotted line is the joint-training
  loss, and the shaded gap is the empirical sequential price. Panel~(c)
  reports the number of tasks required to enter a $1\%$ neighborhood of the
  stationary population loss. Error bars in panels~(b)--(c) denote standard
  errors across seeds.}
  \label{fig:rotated-mnist}
\end{figure}

\subsection{Metrics and stationary estimates}

At task $T$, population loss $G_T$ is the uniform average of the current
network's test loss over the four rotations. Forgetting $F_T$ averages the same
four rotation-specific losses with weights given by their empirical
frequencies among the first $T-1$ tasks.
Here, $F_T$ measures historical-task loss without subtracting the loss at task
acquisition, which need not vanish under finite-step training.

For each seed and $\lambda$, we estimate stationary population loss by
averaging $G_T$ from task $2{,}000$ through task $3{,}000$. The joint-training
loss is averaged across the five joint models, and their difference defines
the empirical sequential price. Panel~(c) first smooths each trajectory over
ten recorded evaluations. Its $1\%$ band is one percent of the gap between the
task-$1$ loss and the stationary estimate. The reported relaxation time is the
first task beginning ten consecutive recorded evaluations inside this band.
All fifty trajectories satisfy this criterion within the experimental
horizon; the latest crossing occurs at task $2{,}060$.

\subsection{Results beyond the linear dynamics}

Panel~\ref{fig:rotated-mnist}(a) shows forgetting approaching population loss.
Over the final $1{,}000$ tasks, their mean absolute difference is
$7.95\times10^{-5}$. Thus the forgetting--loss connection remains visible in
this nonlinear task stream.
For these four uniformly sampled task types, convergence of the empirical task
frequencies to the population weights already implies agreement of the two loss
averages when task-wise losses remain bounded. We therefore interpret this
observation as qualitative consistency with the forgetting--loss connection.

Panels~\ref{fig:rotated-mnist}(b)--(c) reveal a U-shaped consolidation
trade-off. Without consolidation, stationary population loss is
$0.01547\pm0.00029$, compared with the joint-training loss
$0.00600\pm0.00006$. At $\lambda=4$, stationary loss falls to
$0.00643\pm0.00001$: the sequential price decreases from $0.00947$ to
$0.00043$, a $95.5\%$ reduction. Moderate consolidation also reaches
stationarity fastest; the mean relaxation times are $424\pm33$ tasks at
$\lambda=2$ and $468\pm17$ tasks at $\lambda=4$.

Stronger consolidation produces a second regime. At $\lambda=64$, stationary
population loss rises to $0.01295\pm0.00025$ and the relaxation time increases
to $1{,}392\pm245$ tasks. This U-shaped pattern is consistent with moderate
functional EWC suppressing variation across successive rotation tasks, while a
very large penalty preserves more of the network's earlier approximation error
and induces under-adaptation. This bias is absent from the exact linear
per-task minimization analyzed in Section~\ref{sec:ewc}. The forgetting--loss
comparison and sequential-price benchmark therefore remain informative in this
nonlinear experiment, whereas the monotone large-$\lambda$ law remains specific
to the exact theoretical dynamics.

\end{document}

%% file: math_commands.tex
\usepackage{amsmath,amsfonts,bm}

\def\eqref#1{equation~\ref{#1}}

\def\1{\bm{1}}

\DeclareMathAlphabet{\mathsfit}{\encodingdefault}{\sfdefault}{m}{sl}
\SetMathAlphabet{\mathsfit}{bold}{\encodingdefault}{\sfdefault}{bx}{n}

